\theoremstyle{plain}
\newtheorem{theorem}{Theorem}[section]
\newtheorem{lemma}[theorem]{Lemma}
\theoremstyle{definition}
\newtheorem{definition}[theorem]{Definition}
\theoremstyle{remark}
\title{Federated Computation of ROC and PR Curves}
\author{
	Xuefeng Xu\\
	University of Warwick\\
	\texttt{xuefeng.xu@warwick.ac.uk}
	\and
	Graham Cormode\\
	University of Warwick\\
	\texttt{g.cormode@warwick.ac.uk}
}
\begin{document}

\maketitle

\begin{abstract}
Receiver Operating Characteristic (ROC) and Precision-Recall (PR) curves are fundamental tools for evaluating machine learning classifiers, offering detailed insights into the trade-offs between true positive rate vs. false positive rate (ROC) or precision vs. recall (PR). However, in Federated Learning (FL) scenarios, where data is distributed across multiple clients, computing these curves is challenging due to privacy and communication constraints. Specifically, the server cannot access raw prediction scores and class labels, which are used to compute the ROC and PR curves in a centralized setting. In this paper, we propose a novel method for approximating ROC and PR curves in a federated setting by estimating quantiles of the prediction score distribution under distributed differential privacy. 
We provide theoretical bounds on the Area Error (AE) between the true and estimated curves, demonstrating the trade-offs between approximation accuracy, privacy, and communication cost. Empirical results on real-world datasets demonstrate that our method achieves high approximation accuracy with minimal communication and strong privacy guarantees, making it practical for privacy-preserving model evaluation in federated systems.
\end{abstract}

\section{Introduction}
\label{intro}

Federated learning (FL) \citep{Kairouz2021a} is a distributed machine learning paradigm that enables multiple clients to collaboratively train a model without sharing their raw, heterogenous data. This framework is particularly valuable in privacy-sensitive domains such as healthcare and finance. However, current FL evaluation tools often rely on simple metrics like accuracy and loss, as implemented in frameworks like Flower \citep{Beutel2020}. These metrics, while useful during training, provide an incomplete picture of model performance. Accuracy, in particular, is unreliable on imbalanced datasets \citep{Provost1998}, where a classifier may appear effective simply by predicting the majority class. 
Loss values are often not interpretable in practice and provide limited insight for deployment decisions. 
Moreover, one model may achieve lower loss value, but have worse ROC/PR curves compared to another. 
Without computing these curves in the federated setting, such differences in model behavior would remain hidden. Some systems, such as FATE \citep{Liu2021}, support more advanced metrics like Area Under the Curve (AUC), but only for local evaluation on individual clients, which fails to capture global model behavior across all participants. This limitation prevents a full assessment of model performance, hindering informed decisions about model deployment and improvement.

In contrast, ROC \citep{Provost1998} and PR \citep{Raghavan1989} curves are standard tools in centralized machine learning for evaluating binary classifiers. The ROC curve plots the True Positive Rate (TPR) against the False Positive Rate (FPR), while the PR curve plots precision against recall. These curves provide a comprehensive view of model performance across all classification thresholds. However, computing ROC and PR curves in federated settings presents two major challenges. First, their exact computation requires access to all prediction scores and labels, which violates the privacy principles of FL. Second, naively aggregating this information incurs significant communication cost, which scales linearly with dataset size. Sampling-based alternatives could mitigate this cost, but they still require access to raw data and may leak sensitive information. Prior work \citep{Matthews2013} has shown that even the ROC curve itself may reveal information about class labels, raising privacy concerns. All these challenges prevent the coordinating server in FL from computing the exact ROC and PR curves as well as obtaining the raw prediction scores.

In this paper we introduce methods to approximate ROC and PR curves in the federated setting using only quantiles of the prediction score distribution and class priors, without accessing raw data. We define the approximation quality via the Area Error (AE) (Definition \ref{def:area-error}), which measures the integral of the absolute error between the true and estimated curves. Under a mild condition on the score distribution, we prove a worst-case AE bound of $O(1/Q)$ for the ROC curve, where $Q$ is the number of quantiles used, which governs the communication cost. 
The AE bound is $\tilde{O}(1/Q)$\footnote{As usual, $\tilde{O}$ notation suppresses logarithmic factors.} for the PR curve under mild class imbalance, where the class ratio $r=n^+/n^-\ge 0.1$, with $n^+$ and $n^-$ representing the number of positive and negative examples, respectively. 
Under extreme imbalance, i.e., $r\ll1$, the AE is bounded by $\tilde{O}(\frac{1}{Qr})$.

This bound is extended to $\tilde{O}(\frac{1}{Q} + \frac{1}{n\varepsilon})$~\footnote{In the extreme class imbalance case for the PR curve, the error bound becomes $\tilde{O}(\frac{1}{Qr} + \frac{1}{n\varepsilon})$; throughout the paper, our analysis focuses on mild class imbalance, where the error is $\tilde{O}(1/Q)$.} when differential privacy \citep{Dwork2014} is applied, where $n$ is the total number of examples and $\varepsilon$ is the standard privacy budget. Notably, our method is agnostic to data heterogeneity and skew, and requires only $O(Q)$ communication, providing an efficient accuracy-privacy-communication trade-off. Empirically, we show that with $Q\approx 10^2$, the AE of ROC is often close to $10^{-3}$, the AE of PR is below $10^{-2}$, and they remain low even under strong privacy settings (e.g., $\varepsilon \le 1$).

Our method uses federated quantile estimation which can be implemented via histogram aggregation, where clients locally bin prediction scores with evenly spaced boundaries across the score range. This method is particularly well-suited for FL as it enables accurate quantile estimation with guaranteed error bounds, regardless of score distributions and data heterogeneity. 
For privacy, we adopt Distributed Differential Privacy (DDP) \citep{Goryczka2017}, where each client adds independent noise to their local histograms before sending them to the server. 
The server aggregates histograms to estimate global quantiles and constructs approximate curves using monotone interpolation.

\textbf{Our contributions:} 
We present and evaluate novel algorithms for providing characterizations of classifier efficacy with guaranteed privacy and accuracy. 
In more detail, our contributions are:
\begin{enumerate}
\item We present our approach for approximating ROC and PR curves in the federated setting without the need to share raw data.
\item We formally prove that the Area Error is bounded by $O(1/Q)$ for the ROC curve and $\tilde{O}(1/Q)$ for the PR curve under a mild smoothness condition, regardless of data heterogeneity.
\item We also present a privacy-preserving variant of our protocol achieving $\tilde{O}(\frac{1}{Q} + \frac{1}{n\varepsilon})$ error under distributed differential privacy.
\item We perform empirical validation showing high approximation accuracy and low communication overhead under strong privacy guarantees.
\end{enumerate}

\textbf{Related Work.}
\label{related-work}
\citet{Fawcett2006} proposed heuristic methods for merging ROC curves across separated datasets, either by averaging TPRs at fixed FPRs or by averaging FPR-TPR pairs at fixed thresholds. These methods lack theoretical guarantees and are not suitable for heterogeneous data in FL. \citet{Chen2016} investigated differentially private ROC computation in centralized settings using median threshold selection and range queries, followed by post-processing for monotonicity, but did not provide formal error bounds. More recent work \citep{Bell2020} addresses AUC computation under Local DP using hierarchical histograms. The error bound was improved by \citet{Cormode2023} with relaxed assumptions. \citet{Sun2023} also explored DP-based AUC computation, but lacked theoretical guarantees. While AUC provides a scalar performance summary, the full ROC and PR curves offer a more detailed view and require different techniques for computation and error analysis. \citet{Barczewski2025} recently proposed differentially private computation of ROC in FL.
Although their method and ours share comparable theoretical guarantees, empirical evaluations show that our method outperforms theirs.

\section{Preliminaries}
\label{preliminaries}
\subsection{The ROC and PR Curves}
\label{roc-pr-basic}
The \textbf{ROC Curve} evaluates classifiers by plotting the True Positive Rate (TPR) against the False Positive Rate (FPR). The x-axis represents the FPR, defined as $F = \text{FP}/n^{-}$, where FP counts false positives (as the score threshold varies) and $n^{-}$ is the total number of negative examples. 
The y-axis represents the TPR, defined as $T = \text{TP}/n^{+}$, where TP counts true positives and $n^{+}$ is the number of positive examples. Since both FPR and TPR are independent of class distribution, the ROC curve remains invariant to class skew.
An efficient algorithm for constructing the ROC curve is outlined by \citet{Fawcett2006}. It involves sorting prediction scores in descending order, using each score as a threshold, and computing the corresponding FPRs and TPRs. Linear interpolation is applied between points. The ROC curve is non-decreasing, progressing from (0,0) (score threshold $s=1$) to (1,1) ($s=0$). A random classifier lies along the diagonal $y = x$, while curves bending toward the top-left indicate better performance. The area under the ROC curve (AUC-ROC) is a common summary metric, computed via the trapezoidal rule.

The \textbf{PR Curve} plots precision against recall. The x-axis represents recall, equivalent to TPR, while the y-axis represents precision, defined as $P = \text{TP}/(\text{TP+FP})$. Unlike the ROC curve, the PR curve is sensitive to class imbalance since precision depends on both class labels. This property makes the PR curve more informative when positive instances are rare \citep{Saito2015}. The PR curve is non-monotonic, spanning from (0,1) ($s=1$) to (1, $n^{+}/n$) ($s=0$), as precision may decrease with increasing recall. A random classifier yields a horizontal line at $y = n^{+}/n$, while curves bending toward the top-right indicate better performance.

There exists a one-to-one correspondence between ROC and PR curves \citep{Davis2006}. However, linear interpolation in PR space is inappropriate, as it overestimates classifier performance. In \textit{Scikit-learn} \citep{Pedregosa2011}, right-end constant interpolation is used to ensure consistency with average precision, which differs from the trapezoidal rule used for AUC-ROC.

\subsection{Differential Privacy and Federated Quantiles}
\label{dp-fed-quantiles}

\textbf{Differential Privacy (DP)} guarantees that the output of a function $f$ is insensitive to the inclusion or exclusion of any single data point. Formally, a mechanism $\mathcal{M}$ satisfies $\varepsilon$-DP if for any two neighboring datasets $D$ and $D'$ differing by one element, the output distributions $\mathcal{M}(D)$ and $\mathcal{M}(D')$ are close, with probability ratio bounded by $e^{\varepsilon}$. 
A smaller $\varepsilon$ implies stronger privacy but requires more noise, which may lead to less accurate results. The noise magnitude also depends on the sensitivity of the function, defined as $\Delta f = \max_{D,D'} ||f(D)-f(D')||$. One way to achieve DP is to add statistical noise to the output of $f$, where the noise scale is proportional to $\Delta f/\varepsilon$.

\textbf{Distributed DP (DDP)} \citep{Goryczka2017} allows clients to collaboratively ensure privacy. Each client adds local noise such that the aggregated result achieves $\varepsilon$-DP. Continuous noise can be sampled from a Gamma distribution, while discrete noise can be sampled from a P\'{o}lya distribution. This also extends to $(\varepsilon,\delta)$-DP mechanisms such as the Skellam Mechanism \citep{Agarwal2021} and the Distributed Discrete Gaussian Mechanism \citep{Kairouz2021}.

\textbf{Federated Quantile Estimation} can be achieved using histogram-based methods. Each client bins local data and transmits a noisy histogram. The server aggregates these to form a global histogram from which quantiles are estimated. Hierarchical histograms \citep{Hay2010, Qardaji2013, Cormode2019} are preferred over flat histograms, as they provide improved accuracy and enable consistency enforcement via post-processing, making them well-suited for federated and differentially private settings.

\subsection{Monotone Interpolation}
\label{monotone-interp}

Linear interpolation is a simple and efficient method for interpolating a monotone series of points but may lack smoothness and fail to capture underlying trends of the data. Polynomial interpolation improves smoothness but may introduce oscillations, potentially violating monotonicity. Piecewise cubic Hermite interpolation (PCHIP) \citep{Fritsch1980, Fritsch1984, Moler2004} offers a balance: it constructs piecewise cubics that preserve monotonicity and avoid overshoot via carefully selected derivatives at each point. Although higher-order methods like quintic interpolation \citep{Costantini1997, Lux2023} can offer smoother results, they are less efficient. 
In this work, we make use of both linear and PCHIP interpolation for reconstructing curves from quantiles.

\section{Federated Computation of ROC and PR Curves}
\label{fed-roc-pr}

\subsection{Curve Approximation via Quantiles}
\label{curve-approx}

Constructing ROC and PR curves uses prediction scores and corresponding class labels. In federated settings, these raw values cannot be shared due to privacy constraints. Instead, we will estimate the empirical cumulative distribution function (ECDF) $\Phi(s)$ of prediction scores using quantiles, computed separately for the positive and negative classes. 
Figure~\ref{fig:cdf-quantile} shows an example where the ECDF is divided into five evenly spaced quantile intervals, each with a width of 0.2. PCHIP interpolation is  used to estimate the ECDF between quantile points. 
We can then compute TPR/FPR (for ROC) and Precision/Recall (for PR) at arbitrary thresholds using the interpolated ECDFs.
\begin{figure}[t]
\centering
\subfloat{%
\includegraphics[width=0.33\columnwidth]{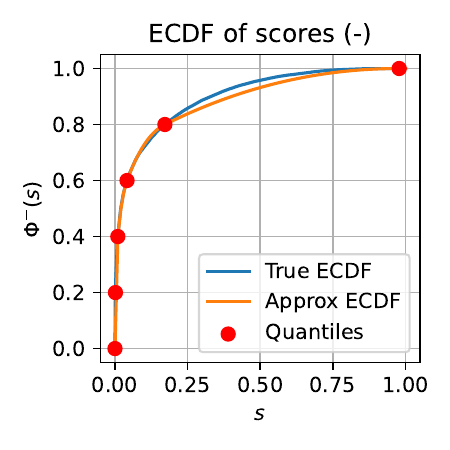}%
\label{fig:cdf-quantile-neg}}%
\subfloat{%
\includegraphics[width=0.33\columnwidth]{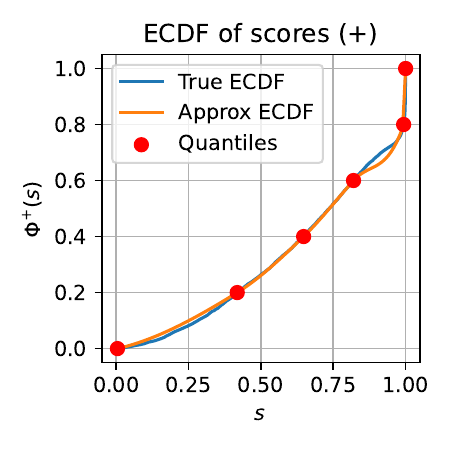}%
\label{fig:cdf-quantile-pos}}%
\caption{ECDF approximation using quantiles for negative and positive classes.}%
\label{fig:cdf-quantile}%
\end{figure}

For the ROC curve, the FPR depends only on the negative score distribution, while TPR depends on the positive score distribution,  computed as:
\begin{align}
\label{eq:fpr-tpr}
F(s)&=\Pr[S > s\mid y=0]=1 - \Phi^{-}(s),\\
T(s)&=\Pr[S > s\mid y=1]=1 - \Phi^{+}(s).
\end{align}
The PR curve uses $T(s)$ as recall and computes precision $P(s)$ using $T$ and $F$ as:
\begin{equation}
\label{eq:precision}
P(s) = \frac{T(s)n^{+}}{T(s)n^{+} + F(s)n^{-}}.
\end{equation}
Figure~\ref{fig:curve-approx} shows the resulting approximated ROC and PR curves derived from the estimated ECDFs. 
In this example, the classifier used is XGBoost \citep{Chen2016a} on the Adult dataset \citep{Becker1996}, although our approach is agnostic to the classifier and data used. 
For exposition, we assume that prediction scores lie in $[0,1]$ (and so may be thought of as probabilities), but the approach applies to arbitrary score ranges.

\begin{figure}[t]
\centering
\subfloat{%
\includegraphics[width=0.33\columnwidth]{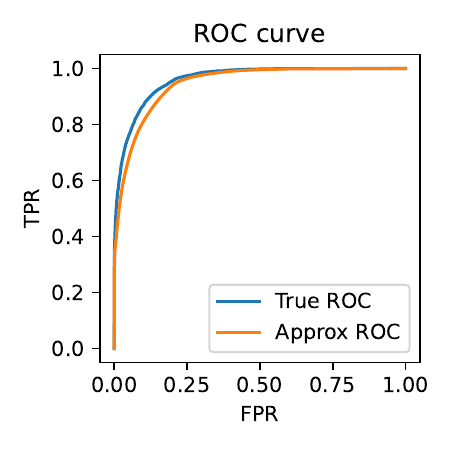}%
\label{fig:roc-approx}}%
\subfloat{%
\includegraphics[width=0.33\columnwidth]{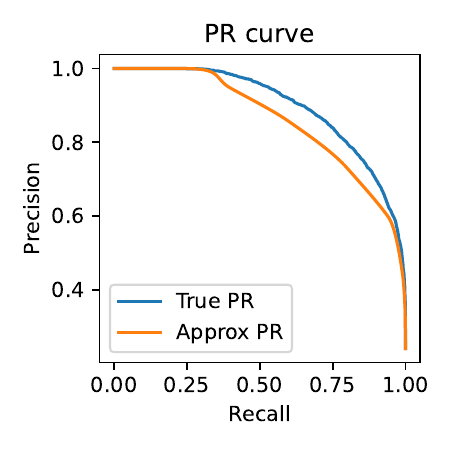}%
\label{fig:pr-approx}}%
\caption{Approximated ROC and PR curves constructed from ECDFs.}%
\label{fig:curve-approx}%
\end{figure}

\subsection{Quantile Estimation via Histograms}
\label{quantile-histogram}

Quantiles are estimated via hierarchical histograms of height $h$. At each level $i$ ($1 \leq i \leq h$), the prediction score range is divided into $b^i$ equal-width bins, where $b$ is the branching factor. Each client builds separate histograms for positive and negative examples and sends them to the server. Under DP, each client splits its privacy budget $\varepsilon$ across the $h$ layers and adds independent noise to each bin \citep{Hay2010, Qardaji2013}. The histograms are aggregated at the server, enabling quantile estimation without accessing raw scores. The full protocol is described in Algorithm~\ref{alg:fed-roc-pr} (Appendix~\ref{sec:alg}).

\textbf{Data Heterogeneity.}
The histogram-based quantile estimation is agnostic to data heterogeneity and skew. Since each client locally bins scores and the server aggregates counts additively, the resulting global histogram matches the centralized case, regardless of how data is distributed across clients.

\textbf{Communication Cost.}
The communication cost is linear in the number of leaf bins: $O(b^h)$. Under SA, each client sends only the $b^h$ leaf counts. Under DP, a client sends $b + b^2 + \cdots + b^h = \frac{b(b^h - 1)}{b - 1}$ bins, which is also $O(b^h)$. To ensure accurate quantile estimation, we require $b^h > Q$, where $Q$ is the number of quantiles. To balance communication and accuracy, we set $h = \lceil \log_b Q \rceil + c$, where $c$ is a small constant (typically 2 or 3). This keeps the communication cost at $O(Q)$, which is much smaller than the dataset size. In our experiments, we use $b=2$ and $c=2$. 
When $Q=1024$, this is sufficient to ensure low Area Error (Definition~\ref{def:area-error}). Under this setting, each client transmits approximately $2^{13}$ bins per histogram, totaling around 8K integers (32-bit), i.e., 32K Bytes.

\section{Analysis of Area Error}
\label{area-error-analysis}

\subsection{Area Error Definition}

We use Area Error (AE) to quantify the discrepancy between the true and approximated ROC or PR curves, similar in spirit to the notion of  the $L_1$ metric from \citet{Clemencon2009}. Formally:

\begin{definition}[AE]
\label{def:area-error}
The Area Error is defined as the integral of the absolute difference between the true and estimated curves over $[0,1]$:
\begin{align}
\text{AE}_\text{ROC} &= \int_0^1 |T(f) - \hat{T}(f)| df, \\
\text{AE}_\text{PR} &= \int_0^1 |P(t) - \hat{P}(t)| dt.
\end{align}
\end{definition}

Here, $T(f)$ and $\hat{T}(f)$ denote the true and estimated TPR at FPR $f$, while $P(t)$ and $\hat{P}(t)$ denote the true and estimated precision at recall $t$ (since recall is equivalent to TPR).

AE is a stricter error measure than absolute AUC difference (which computes $|\int_0^1 T(f)df - \int_0^1 \hat{T}(f)df|$ for ROC). A small AE implies a small AUC error, but not vice versa. This is because AUC aggregates performance into a single scalar, while AE captures curve discrepancies over the full domain. 
Another weak alternative is the Point Error~\citep{Barczewski2025}, which averages the absolute or squared difference at the score thresholds. However, this neglects cumulative errors and lacks robustness across the full curve.

\subsection{Bounding the Area Error Using Quantiles}
\label{area-error-bounds}

We consider the case where quantiles are evenly spaced. Let $f$ and $t$ denote the exact FPR and TPR at score threshold $s$, and let $\hat{f}$ and $\hat{t}$ be their estimations. If $Q^-$ and $Q^+$ exact quantiles are used for the negative and positive classes respectively, then:
\begin{equation}
|f-\hat{f}| \le \frac{1}{2(Q^--1)}, \quad |t-\hat{t}| \le\frac{1}{2(Q^+-1)}.
\end{equation}
The bound arises because the maximum quantile error corresponds to half the width of a quantile interval. Using this, we can now bound the AE for ROC and PR curves.

\begin{theorem}[ROC-AE]
\label{thm:area-error-roc}
If $Q$ exact quantiles are used for both positive and negative examples, then the Area Error between the true and estimated ROC curves is bounded by $O(1/Q)$.
\end{theorem}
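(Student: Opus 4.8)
The plan is to work with the parametric description of the curve and to convert the two coordinatewise bounds stated just above into a single bound on the area enclosed between the curves. Parametrising by the threshold $s$, the true curve passes through $(f(s),t(s))$ and the estimated curve through $(\hat{f}(s),\hat{t}(s))$, with $|f(s)-\hat{f}(s)|\le \delta_f$ and $|t(s)-\hat{t}(s)|\le \delta_t$, where $\delta_f=\delta_t=\frac{1}{2(Q-1)}$. The difficulty in passing from these to $\mathrm{AE}_{\mathrm{ROC}}=\int_0^1|T(f)-\hat{T}(f)|\,df$ is that a horizontal error of size $\delta_f$ can induce an arbitrarily large pointwise vertical error wherever the curve is steep, so no uniform Lipschitz bound on $T$ is available. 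The resolution I would pursue is that steep regions are narrow, so one should bound the accumulated area rather than the pointwise error.

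First I would split the total error through an intermediate curve $\tilde{C}=\{(f(s),\hat{t}(s))\}$, which uses the true FPR but the estimated TPR at each threshold, and apply the triangle inequality $\int_0^1|T-\hat{T}|\,df\le \int_0^1|T-\tilde{T}|\,df+\int_0^1|\tilde{T}-\hat{T}|\,df$. The first term is the purely vertical discrepancy: at any FPR $f$ both $T$ and $\tilde{T}$ correspond to the same threshold, so their values differ by $|t(s)-\hat{t}(s)|\le \delta_t$ pointwise, giving $\int_0^1|T-\tilde{T}|\,df\le \delta_t$. The second term is the purely horizontal discrepancy, since $\tilde{C}$ and $\hat{C}$ share the same TPR value $\hat{t}(s)$ at every threshold and differ only in FPR by at most $\delta_f$; both are non-decreasing and join $(0,0)$ to $(1,1)$.

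For this horizontal term I would use the fact that, for two non-decreasing curves joining $(0,0)$ to $(1,1)$, the area enclosed between them is the same whether it is swept by vertical or by horizontal slices (a Fubini argument on the region lying between the graphs, valid because both are monotone, splitting at any crossings). Measured horizontally, the gap between $\tilde{C}$ and $\hat{C}$ at any TPR level is at most $\delta_f$, so integrating over the TPR range $[0,1]$ gives enclosed area at most $\delta_f$, hence $\int_0^1|\tilde{T}-\hat{T}|\,df\le \delta_f$. Combining the two contributions yields $\mathrm{AE}_{\mathrm{ROC}}\le \delta_f+\delta_t=\frac{1}{Q-1}=O(1/Q)$. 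The main obstacle is precisely this horizontal term: the honest way to see that unbounded slope does no harm is to observe that a horizontal perturbation of a monotone curve of total height $1$ by at most $\delta_f$ can enclose area at most $\delta_f$, which is exactly what the orientation-independence of the enclosed area delivers. Establishing that identity (or, equivalently, arguing directly that the steep portions are narrow and contribute proportionally little) is the only non-routine step; everything else is pointwise estimation and the triangle inequality.
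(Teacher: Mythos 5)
Your proof is correct, but it takes a genuinely different route from the paper's. The paper proves the bound by partitioning the FPR axis into the $Q^--1$ strips determined by the negative quantiles, bounding the worst-case error in each strip by an explicit trapezoid of area $\tfrac12\bigl(\tfrac{1}{Q^+-1}+\Delta_i\bigr)\cdot\tfrac{1}{Q^--1}$, summing via $\sum_i\Delta_i\le 1$, and then running a separate case analysis (two trapezoids minus a rectangle) to show that positive quantiles falling inside a strip cannot worsen the bound. Your decomposition through the intermediate curve $\tilde{C}=\{(f(s),\hat{t}(s))\}$ replaces all of that with a triangle inequality: the vertical term is a pointwise $\delta_t$ bound, and the horizontal term is handled by the observation that the area enclosed between two monotone curves joining $(0,0)$ to $(1,1)$ is orientation-independent, so a horizontal perturbation of at most $\delta_f$ encloses area at most $\delta_f$. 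That Fubini identity is exactly the continuous analogue of the paper's $\sum_i\Delta_i\le1$ step (``steep regions are narrow''), and your final bound $\delta_f+\delta_t=\tfrac{1}{2(Q^--1)}+\tfrac{1}{2(Q^+-1)}$ coincides with the paper's constant. What your version buys is the elimination of the two-stage strip/interaction analysis, since the decomposition handles both quantile sets simultaneously; what the paper's version buys is an explicit, picture-level worst-case construction. Two small points you should make explicit: the estimated curve (and hence $\tilde{C}$) is non-decreasing because the ECDFs are built by monotone interpolation, which the Fubini argument needs; and at values of $f$ where the curves have vertical jumps the pointwise bound $|T(f)-\tilde{T}(f)|\le\delta_t$ may fail, but only on a set of measure zero, so the integral bound is unaffected.
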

\begin{proof}[Proof sketch]
Evenly divide the ROC space into $Q-1$ vertical strips. Each strip contributes an error of $O(1/Q^2 + \Delta_i/Q)$ at most, where the first term accounts for the bounded deviation in TPR/FPR and the second term corresponds to a region of uncertain area. Since the ROC space is bounded, we have $\sum_i \Delta_i\le 1$, resulting in an overall area error of $O(1/Q)$ when summing over the $Q-1$ strips. See Appendix~\ref{proof-area-error-roc} for full details of the proof.
\end{proof}

The analysis for the PR curve is similar in outline, but has more wrinkles. 
Precision depends on both $T(s)$ and $F(s)$ via Equation~\eqref{eq:precision}, and is also affected by class imbalance \citep{Williams2021}. 
In the worst case (when $t$ is close to 0), the absolute error $|P(t) - \hat{P}(t)|$ can be close to 1. When $t$ is close 1, the absolute error is as high as $O(1/Q)$. However, we can bound the AE by integrating the absolute error over the entire range of TPR (recall).

\begin{theorem}[PR-AE]
\label{thm:area-error-pr}
If $Q$ quantiles are used for both positive and negative examples, under mild class imbalance ($r=n^+/n^-\ge0.1$), the Area Error between the true and estimated PR curves is bounded by $\tilde{O}(1/Q)$. Under extreme class imbalance ($r\ll1$), the AE is bounded by $\tilde{O}(\frac{1}{Qr})$.
\end{theorem}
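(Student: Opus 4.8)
The plan is to reduce the precision error to the quantile errors in TPR and FPR and then integrate over the recall axis. Writing the class ratio as $r=n^+/n^-$, Equation~\eqref{eq:precision} becomes $P=\frac{Tr}{Tr+F}$, so precision is a smooth function of $(T,F)$ away from the origin. At recall $t$ the true curve sits at a threshold $s$ with $T(s)=t$ and $F=F(s)$, while the estimated curve sits at a nearby threshold; the quantile bounds give $|T-\hat{T}|\le\delta$ and $|F-\hat{F}|\le\delta$ with $\delta=\frac{1}{2(Q-1)}=O(1/Q)$. The crucial difference from the ROC case (Theorem~\ref{thm:area-error-roc}) is that the sensitivity of $P$ to these errors is \emph{not} bounded by a constant: it grows as recall $\to 0$, which is exactly where the denominator $Tr+F$ becomes small.

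First I would establish a pointwise bound. A direct expansion of the difference of two fractions yields $|P(t)-\hat{P}(t)|\le\frac{r\delta\,(T+F)}{(Tr+F)^2}+(\text{lower order})$, where $T=t$ and $F=F(s(t))$ is the FPR carried by the true ROC curve at recall $t$. Maximizing the factor $\frac{t+F}{(tr+F)^2}$ over all admissible $F\ge 0$ (equivalently, over all ROC shapes) shows that it is decreasing in $F$ for $r<2$ and hence maximal at $F=0$, the perfect-classifier limit. This gives the clean worst-case envelope $|P(t)-\hat{P}(t)|\le\min\!\left(1,\tfrac{\delta}{tr}\right)$, the trivial bound $1$ reflecting that precision lies in $[0,1]$ and dominating near $t=0$.

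The final step is to integrate this envelope over recall. Splitting at the crossover $t_0=\delta/r$ gives $\int_0^{t_0}1\,dt+\int_{t_0}^1\frac{\delta}{tr}\,dt=\frac{\delta}{r}\bigl(1+\ln(r/\delta)\bigr)$ (assuming $Q\gtrsim 1/r$, else the claim is vacuous). Substituting $\delta=O(1/Q)$ yields $\text{AE}_\text{PR}=\tilde{O}\!\left(\frac{1}{Qr}\right)$, the extreme-imbalance claim; for $r\ge 0.1$ the prefactor $1/r$ is a constant and the bound collapses to $\tilde{O}(1/Q)$, with the logarithm being the only source of the tilde. This matches the behavior flagged before the theorem: error close to $1$ near $t=0$ (the length-$t_0$ cap) and error $O(1/Q)$ near $t=1$.

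The step I expect to be the main obstacle is justifying the pointwise bound rigorously, because the true and estimated curves must be compared at the \emph{same recall} rather than the same threshold, and the two recalls are matched only up to the TPR quantile error. Translating a recall mismatch of size $\delta$ into an FPR discrepancy requires the local slope of the ROC curve, which is unbounded where that curve is steep or flat, the latter being precisely the vertical segments that make the PR curve non-monotone. The saving grace I would exploit is that a flat ROC segment occupies a single recall value and therefore contributes zero to the $dt$-integral, so the problematic reparametrization terms vanish in measure; a mild smoothness assumption on the score distribution then controls the remaining contributions and keeps the effective matched-recall FPR error at $O(\delta)$. Handling this matching carefully, together with verifying that the $\delta$-perturbation of the denominator $Tr+F$ leaves the envelope unchanged, is where the bulk of the technical work lies.
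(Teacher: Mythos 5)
Your proposal is correct and follows essentially the same route as the paper's proof: a pointwise envelope $|P(t)-\hat{P}(t)|\le\min\bigl(1,O(\alpha/(tr))\bigr)$ obtained by perturbing the FPR at fixed recall (with the worst case as $F\to 0$), followed by integration over $t$ with a split near $t=O(\alpha/r)$ that produces the logarithmic factor and the $\tilde{O}(\frac{1}{Qr})$ bound. The recall-matching issue you flag as the main obstacle is exactly what the paper resolves via the $\Theta(1)$-well-behaved assumption and Lemma~\ref{lem:abs-error-curve}, which gives $|F(t)-\hat{F}(t)|\le\Theta(\alpha)$ directly.
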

\begin{proof}[Proof sketch]
AE is the integral of $|P(t) - \hat{P}(t)|$ over $t \in [0,1]$. By bounding $|P - \hat{P}|$ as a function of $t$, and integrating, we obtain a worst-case Area Error. However, under extreme class imbalance, the bound is also dependent on the ratio $r$. See full details in Appendix~\ref{proof-area-error-pr}.
\end{proof}

\subsection{Error Analysis in the Federated Setting with Security and Privacy}
\label{sec:dp-anal}

When applying our results to the federated setting, we need to extend the proof to incorporate the additional uncertainties due to approximate quantiles and privacy noise.
Under a mild assumption on the score distributions (formalized as ``$\Theta(1)$-well-behaved'' in Definition~\ref{def:well-behaved-distribution}), we derive Area Error bounds separately for both the SA and DDP settings.
Full proofs and further discussion are provided in Appendix~\ref{proof-area-error-federated}.

\begin{theorem}[AE-SA]
\label{thm:area-error-sa}
Under Secure Aggregation, the Area Error between the true and estimated curves is bounded by $O(1/Q)$ for the ROC curve and $\tilde{O}(1/Q)$ for the PR curve.
\end{theorem}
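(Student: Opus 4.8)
The plan is to reduce the Secure Aggregation (SA) case to the exact-quantile results of Theorems~\ref{thm:area-error-roc} and~\ref{thm:area-error-pr} by isolating the one new source of error. The key observation is that SA reconstructs the global histogram \emph{exactly}: the server obtains the faithful sum of the clients' local leaf counts with no injected noise, so the aggregated histogram coincides with the one that would be built centrally. Consequently, the only discrepancy relative to the exact-quantile analysis comes from the finite resolution of the histogram together with the interpolation used to read off intermediate values; the privacy-noise term is absent here and is deferred to the DDP theorem. I would therefore bound this discretization error, show it is again $O(1/Q)$ in ECDF (probability) space, and then invoke the earlier strip-decomposition bounds almost verbatim.

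First I would quantify the histogram error. With branching factor $b$ and height $h=\lceil\log_b Q\rceil+c$, the leaves partition the score range into $b^h\ge Q\,b^{c}$ equal-width bins. At every bin boundary the aggregated cumulative count determines $\Phi^{-}$ and $\Phi^{+}$ exactly, so the estimation error is confined to the interior of a single bin and is bounded by the probability mass that bin carries. This is exactly where the $\Theta(1)$-well-behaved assumption (Definition~\ref{def:well-behaved-distribution}) is needed: its upper bound on the density forces every bin of width $1/b^h$ to carry mass $O(1/b^h)$, whence $|\Phi(s)-\hat\Phi(s)|=O(1/b^h)=O(1/(Q\,b^{c}))=O(1/Q)$ uniformly in $s$. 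I would also record that PCHIP interpolation is monotone and range-preserving, so the reconstructed value never leaves the cumulative-count band of its enclosing bin and thus contributes nothing beyond this $O(1/b^h)$ term.

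Combining the two contributions with the triangle inequality then closes the argument. For the ROC curve, both $|f-\hat f|$ and $|t-\hat t|$ become $O(1/Q)$, being the sum of the exact-quantile term from Theorem~\ref{thm:area-error-roc} and the discretization term just derived; feeding these into the same decomposition into $Q-1$ vertical strips, each contributing $O(1/Q^2+\Delta_i/Q)$ with $\sum_i\Delta_i\le1$, yields $\text{AE}_\text{ROC}=O(1/Q)$. For the PR curve, precision is the function of $T$ and $F$ given in Equation~\eqref{eq:precision}, so the same $O(1/Q)$ control on $T$ and $F$ propagates through precisely as in Theorem~\ref{thm:area-error-pr}, and integrating $|P(t)-\hat P(t)|$ over $t\in[0,1]$ reproduces the $\tilde O(1/Q)$ bound under mild imbalance.

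The main obstacle I anticipate is the translation between score-space resolution and probability-space error: a bare bin-width bound says nothing about the ECDF unless the mass per bin is controlled, and this is exactly where the well-behaved assumption must be invoked and its constants tracked carefully so that they do not covertly depend on $Q$. The second delicate point, inherited rather than new, is the behavior of the PR curve near $t=0$, where precision is most sensitive to perturbations of $F$; I would split the recall integral at a threshold of order $1/Q$, bound the small-$t$ tail by its length, and obtain the surviving logarithmic factor from integrating the $1/t$-type sensitivity of precision over $[\Theta(1/Q),1]$, exactly as in the proof of Theorem~\ref{thm:area-error-pr}. Confirming that the monotonicity-preserving interpolation cannot inflate the bound is a minor but necessary check.
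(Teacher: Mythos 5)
Your proposal is correct and follows essentially the same route as the paper: under SA the aggregated histogram is exact, so the only new error is the $b^{-h}=O(1/Q)$ discretization term, which the $\Theta(1)$-well-behaved (Lipschitz) assumption converts into an $O(1/Q)$ additive error on $T$ and $F$ (this is exactly the role of Lemmas~\ref{lem:abs-error-score} and~\ref{lem:abs-error-curve}), after which the ROC and PR area-error arguments of Theorems~\ref{thm:area-error-roc} and~\ref{thm:area-error-pr} apply unchanged. Your phrasing of the conversion via per-bin mass rather than via quantile-location error plus Lipschitz is an equivalent repackaging, not a different argument.
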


\begin{theorem}[AE-DDP]
\label{thm:area-error-ddp}
Under Distributed Differential Privacy, the Area Error between the true and estimated curves is bounded by $\tilde{O}(\frac{1}{Q} + \frac{1}{n\varepsilon})$ for the ROC and PR curves.
\end{theorem}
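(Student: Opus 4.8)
The plan is to decompose the total Area Error into two contributions: the \emph{approximation error} from representing each class ECDF by $Q$ quantiles, and the \emph{privacy error} from the noise injected into the hierarchical histograms. By the triangle inequality, at every point on the curve the deviation between the true value and the DDP estimate is at most the sum of these two contributions, so integrating over $[0,1]$ gives $\text{AE} \le \text{AE}_{\text{approx}} + \text{AE}_{\text{noise}}$. The first term is exactly what Theorem~\ref{thm:area-error-sa} controls under the $\Theta(1)$-well-behaved assumption, yielding $O(1/Q)$ for ROC and $\tilde{O}(1/Q)$ for PR, so the only genuinely new work is to bound $\text{AE}_{\text{noise}}$ by $\tilde{O}(1/(n\varepsilon))$.

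To bound the privacy error, I would track how the DDP noise propagates from individual bins up to the estimated ECDFs. The budget $\varepsilon$ is split across the $h = \lceil\log_b Q\rceil + c$ levels, and since one record changes a single bin count by one at each level, each noisy bin carries independent noise of scale $O(h/\varepsilon)$. Evaluating an ECDF at a threshold is a prefix-sum (range) query on the hierarchical histogram, which touches $O(h)$ nodes of the hierarchy; combining their noise and applying the consistency post-processing of \citet{Hay2010} bounds the error of any such cumulative count by $\tilde{O}(1/\varepsilon)$, where the $\tilde{O}$ absorbs $\mathrm{poly}(h)=\mathrm{polylog}(Q)$ factors. Dividing by the class size $n^{+}$ or $n^{-}$, each of which is $\Theta(n)$ under mild imbalance, gives a pointwise ECDF error of $\tilde{O}(1/(n\varepsilon))$; through the identities $T = 1-\Phi^{+}$ and $F = 1-\Phi^{-}$ this is precisely the pointwise error in the estimated TPR and FPR.

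For the ROC curve this immediately yields $\text{AE}_{\text{noise}} = \tilde{O}(1/(n\varepsilon))$, since integrating a pointwise error of that order over the unit interval leaves the order unchanged, and combining with the approximation term gives the claimed $\tilde{O}(\frac{1}{Q}+\frac{1}{n\varepsilon})$. For the PR curve I would reuse the precision-perturbation analysis already developed for Theorem~\ref{thm:area-error-pr}: writing $P = T n^{+}/(Tn^{+}+Fn^{-})$, a first-order expansion bounds $|P-\hat{P}|$ by a weighted combination of $|T-\hat T|$ and $|F-\hat F|$, so the $\tilde{O}(1/(n\varepsilon))$ errors in $T$ and $F$ enter precision additively, up to the same amplification factors that already appear in the noiseless PR bound. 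Integrating over recall then carries the noise term through with at most the polylogarithmic and class-ratio factors already present.

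The main obstacle is the PR step: precision is a ratio, so near small recall the denominator $Tn^{+}+Fn^{-}$ becomes small and can amplify the additive perturbations in $T$ and $F$. I expect to handle this exactly as in the noiseless PR-AE proof, splitting the recall axis into a small-recall region (where $|P-\hat P|$ is bounded crudely but occupies a vanishing fraction of the domain) and a bulk region (where the denominator is bounded below and the perturbation is benign); the care needed is to verify that the privacy noise does not move the location or measure of these regions beyond the $\tilde{O}$ factors. A secondary subtlety is ensuring that the aggregated DDP noise concentrates so the per-point bounds hold simultaneously, which follows from a union bound over the $O(Q)$ evaluation points against the sub-exponential tails of the aggregated (Gamma, Skellam, or discrete-Gaussian) noise, contributing only a further $\log Q$ factor absorbed into $\tilde{O}$.
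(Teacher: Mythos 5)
Your proposal follows essentially the same route as the paper: it likewise splits the budget over the $h$ hierarchy levels, accumulates $O(bh^{3}/\varepsilon^{2})$ variance per quantile query to obtain an extra $\tilde{O}(1/(n\varepsilon))$ quantile error on top of the $O(1/Q)$ bin-width error, and pushes the combined error through the $\Theta(1)$-well-behaved lemmas and the existing ROC/PR area-error integrals (the only cosmetic difference is that the paper folds both error sources into a single quantile-error parameter $\alpha$ rather than applying a curve-level triangle inequality). The one point the paper makes explicit that you elide is that $n^{+}$ and $n^{-}$ are themselves noisy under DDP, each carrying an $O(h/\varepsilon)$ additive error that must be (and is) argued to be dominated by the other terms.
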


\section{Empirical Evaluation}
\label{evaluation}

We evaluate our method on three public datasets: Bank \citep{Moro2012}, Adult \citep{Becker1996}, and Cover \citep{Blackard1998}. To account for the sensitivity of class imbalance, we select datasets with varying positive-to-negative ratios. Dataset statistics are summarized in Table~\ref{tab:data-info}. We try score functions of two baseline classifiers: XGBoost and Logistic Regression (see Appendices~\ref{expt-pr-xgb} and \ref{expt-logistic-regression} for more results).
In all experiments, we set the branching factor $b=2$ and the height $h = \lceil \log_2 Q \rceil + 2$. For differential privacy, we use discrete noise and apply post-processing by default. The code is available at \url{https://github.com/xuefeng-xu/fedcurve}. All experiments were performed on an Apple MacBook M3 (16GB RAM), completing within 8 hours.

\begin{table}[t]
\caption{Dataset statistics.}
\label{tab:data-info}
\centering
\begin{tabular}{cccc}
\toprule
Datasets & \# Row  & \# Col & Pos-to-Neg Ratio \\
\midrule
Bank   & 45K  & 16 & 0.132 \\
Adult  & 33K  & 14 & 0.317 \\
Cover* & 581K & 54 & 0.574 \\
\bottomrule
\multicolumn{4}{l}{\small *Class 1 is treated as positive; others as negative.}
\end{tabular}
\end{table}

\subsection{Initial Experiments}

\textbf{Interpolation Methods.}
\label{evaluation-interp}
We compare linear interpolation and piecewise cubic Hermite interpolation (PCHIP) for the interpolation step (Section~\ref{curve-approx}). 
Results are shown in Figures~\ref{fig:interp-roc-xgb} (more plots shown in Figure~\ref{fig:interp-pr-xgb}, \ref{fig:interp-roc-lr}, and \ref{fig:interp-pr-lr} in the Appendix), under Secure Aggregation (SA, no privacy noise), and Distributed Differential Privacy (DDP, $\varepsilon=1$ or $0.3$). 
The PCHIP method consistently outperforms linear interpolation, though the margin is small in some cases. Since PCHIP yields smoother curves that more closely approximate the true ROC and PR curves, we recommend using it in practice.
In Figure~\ref{fig:interp-roc-xgb}, we see that under SA the AE decreases as $Q$ increases, whereas under DDP there is a plateau due to privacy noise, as predicted by our analysis (Theorem~\ref{thm:area-error-ddp}).

\begin{figure*}[t]
\centering
\includegraphics[width=\textwidth]{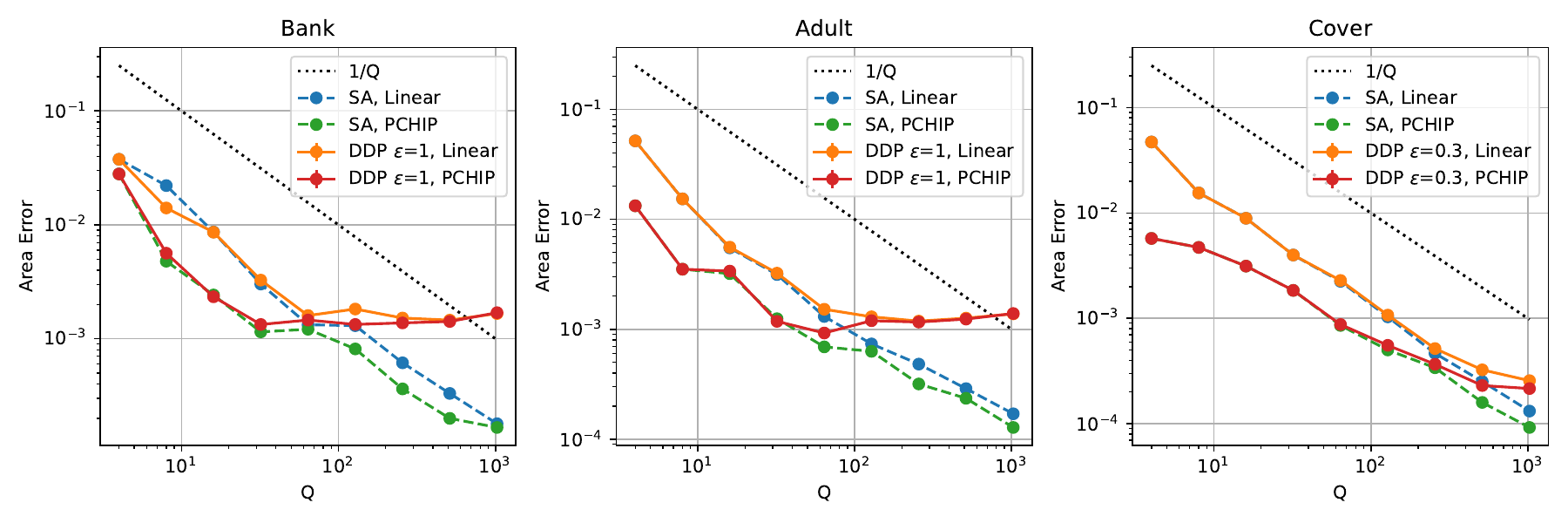}%
\caption{Interpolation method comparison (ROC, XGBoost).}%
\label{fig:interp-roc-xgb}%
\end{figure*}

\textbf{Privacy-Accuracy Trade-off.}
\label{evaluation-epsilon}
We evaluate performance under varying privacy levels $\varepsilon$. As shown in Figure~\ref{fig:epsilon-roc-xgb} (more plots shown in Figure~\ref{fig:epsilon-pr-xgb}, \ref{fig:epsilon-roc-lr}, and \ref{fig:epsilon-pr-lr} in the Appendix), smaller $\varepsilon$ results in higher area error. Nonetheless, the error remains low even under strong privacy guarantees ($\varepsilon \le 1$). Larger datasets such as Cover exhibit smaller error due to more accurate quantile estimation under DP (recall, our error bound scales with $\frac{1}{n\varepsilon}$). 

We also compare using the Exact Quantiles (EQ), as discussed in Section~\ref{area-error-bounds} and Secure Aggregation of histograms (SA), as discussed in Section~\ref{sec:dp-anal}. 
In Figure~\ref{fig:epsilon-roc-xgb}, for Cover dataset the gap between the two is more noticeable, while it is negligible on datasets like Adult and Bank.

\begin{figure*}[t]
\centering
\includegraphics[width=\textwidth]{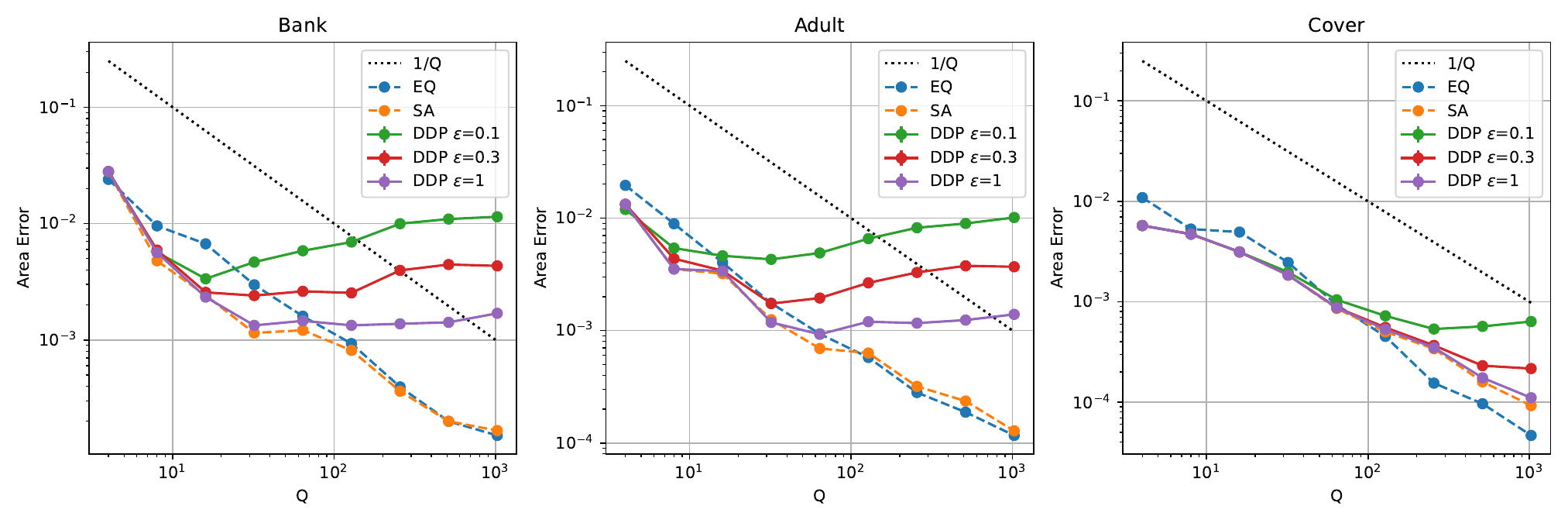}%
\caption{Effect of $\varepsilon$ (ROC, XGBoost).}%
\label{fig:epsilon-roc-xgb}%
\end{figure*}

\textbf{Strategies for PR Curve.}
\label{evaluation-pr-strategy}
We compare two strategies for computing the PR curve: (1) \textbf{Separate:} Estimate quantiles for positive and negative examples separately. (2) \textbf{Combine:} Estimate quantiles from all examples to approximate the denominator of precision, see Equation~\eqref{eq:precision}.

Results are shown in Figures~\ref{fig:pr-strategy-xgb} and \ref{fig:pr-strategy-lr} (in the Appendix). The ``separate'' strategy consistently yields lower and more stable area error. This is because it ensures the precision numerator is always less than or equal to the denominator (Equation~\eqref{eq:precision}). In contrast, the ``combine'' strategy may yield a numerator greater than the denominator, requiring clipping to ensure the maximum value is 1.

\begin{figure*}[t]
\centering
\includegraphics[width=\textwidth]{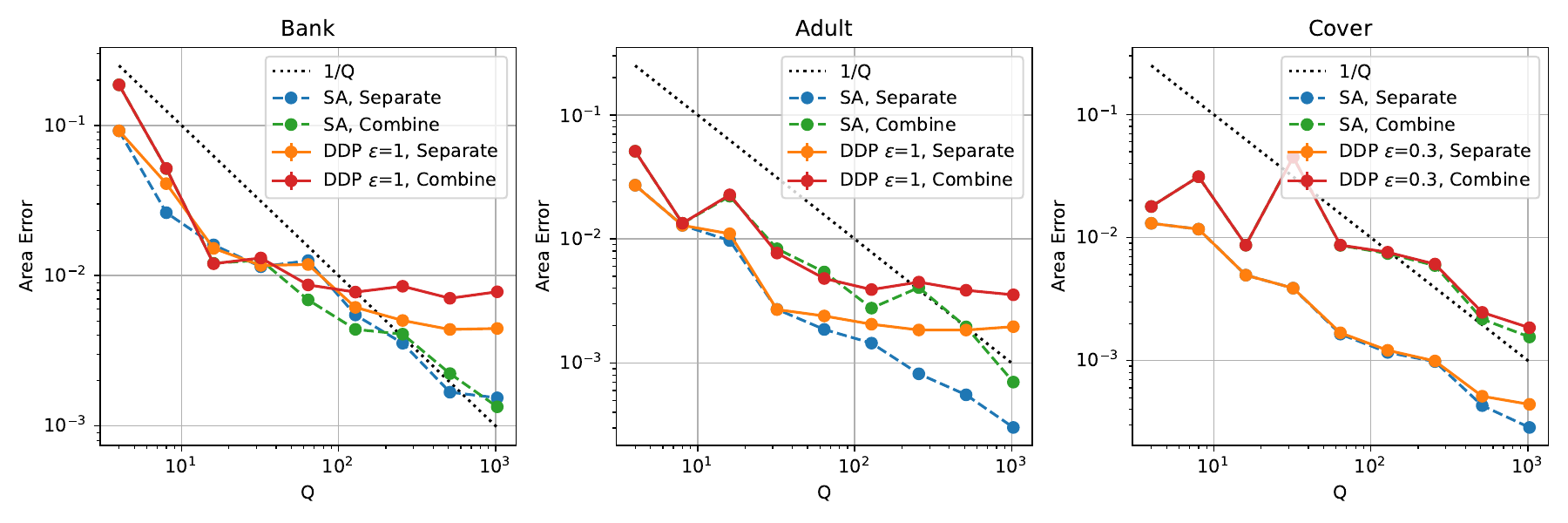}%
\caption{Comparison of strategies for PR curve (XGBoost).}%
\label{fig:pr-strategy-xgb}%
\end{figure*}

\subsection{Comparison with Prior Work}
\label{evaluation-range}

We compare our method with the range-query-based method of \citet{Barczewski2025}, which estimates TPR and FPR at $N$ evenly spaced thresholds using DP-noised range queries. 
To enforce monotonicity, their method introduces smoothing variables $v_i$ added to the TPR and FPR series. The values of $v_i$ are optimized via $l_1$ or $l_2$ minimization. Their theoretical bound on the \textit{Squared Point Error} of ECDF is $O(\frac{\log^3 N}{(n\varepsilon)^2})$\footnote{The paper of \citet{Barczewski2025} omits the factor of $n$ in the statement of their error bound.}, which is $O(\frac{\log^{3/2} N}{n\varepsilon})$ for the \textit{Absolute Point Error}. Our bound on absolute point error of ECDF is $O(\frac{1}{Q} + \frac{\log^{3/2}Q}{n\varepsilon})$ (Lemma \ref{lem:abs-error-score}). Assuming $Q \approx N$, both approaches are asymptotically comparable, but we see that they behave quite different empirically.

Note that the point error fails to capture cumulative errors over the entire curve, thus we evaluate both methods using Area Error. Empirically, as shown in Figures~\ref{fig:range-roc-xgb} (more plots shown in Figures~\ref{fig:range-pr-xgb}, \ref{fig:range-roc-lr}, and \ref{fig:range-pr-lr} in the Appendix), our method produces more stable area errors with increasing $Q$. 
In contrast, the Range method can show erratic AE results due to increased variance in each bin and added smoothing noise. 
The range method can perform better when $Q < 100$ (lower accuracy regime), especially on the Cover dataset, as it uses fixed thresholds for both TPR and FPR (for example, in Figure~\ref{fig:range-roc-xgb}). 
Our approach, which estimates quantiles separately for positives and negatives, may diverge in threshold alignment, affecting accuracy at small $Q$. 
This discrepancy disappears as $Q$ increases.

\begin{figure*}[t]
\centering
\includegraphics[width=\textwidth]{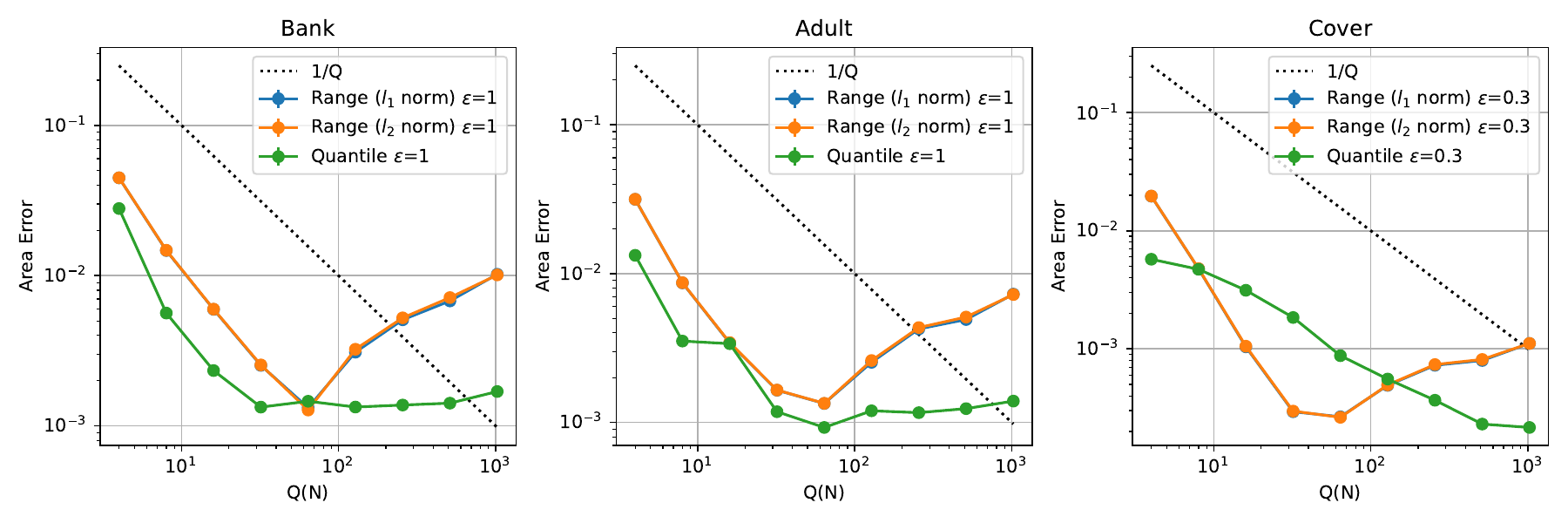}%
\caption{Comparison with range-query method (ROC, XGBoost).}%
\label{fig:range-roc-xgb}%
\end{figure*}

\subsection{Experiments on Class Imbalance Data}
\label{evaluation-class-imbalance}

We evaluate the effect of class imbalance on PR curves by varying the positive-to-negative ratio $r$. Specifically, we fix the set of negative examples and randomly subsample the positive examples to achieve the desired ratio. These experiments use the Secure Aggregation method, with results shown in Figures~\ref{fig:ratio-pr-xgb} and \ref{fig:ratio-pr-lr}.

In most cases, the AE increases as $r$ decreases, which aligns with the theoretical area error bound of $\tilde{O}(\frac{1}{Qr})$. However, there are instances where the observed error is smaller than expected. This occurs because classifier performance degrades significantly on extremely imbalanced datasets, causing the PR curve to approximate that of a random classifier, represented by a horizontal line at $y = n^+ / n \approx 0$. Since this curve is nearly flat, even a poor approximation yields a low area error.

\section{Discussion}
\label{discussion}

\textbf{Extension to Multi-class Setting.}
Our method extends to multi-class classification by decomposing the problem to multiple binary tasks, such as one-vs-rest or one-vs-one schemes. 
For each binary task, ROC and PR curves can be computed using the proposed quantile-based approach, with AE guarantees holding for each curve. Multi-class curves can then be obtained by macro- or weighted-averaging the binary results, and the area error bounds carry over accordingly.

\textbf{Extension to Other Metrics.}
Our approach extends naturally to derived metrics such as AUC, where the area error directly bounds the AUC difference by $O(1/Q)$; and the Mann-Whitney U-statistic, of which AUC is a special case. 
It also applies to the Detection Error Tradeoff (DET) \citep{Martin1997} curve, which plots the False Negative Rate (FNR) against the FPR on a log scale. Since $\text{FNR} = 1 - \text{TPR}$, we can compute DET similarly to ROC.
Precision-Recall-Gain curves \citep{Flach2015}, which adjust precision and recall by the fraction of positive examples, are another promising direction for extending our approach. We leave their exploration for future work.

\textbf{Known Support of Prediction Scores.}
Our method does not assume prior knowledge of the support of prediction scores, as it uses equally spaced bins for histogram construction. However, in some cases, the support may be known. For example, the prediction scores of a $k$-Nearest Neighbors classifier are influenced by the choice of $k$. In such scenarios, the histogram bins can be tailored to the known support to improve estimation accuracy.

\textbf{Handling Space Constraints.}
On devices with limited storage (e.g., edge clients), sketches such as GK \citep{Greenwald2001} can reduce space requirements. The differentially private version \citep{Alabi2023} allows clients to construct histograms locally with bounded space, enabling private quantile estimation under tight client memory constraints.

\textbf{Extension to Local Differential Privacy.}
Our method generalizes to the local DP setting, where each client adds noise independently to their local histogram. In this case, the total noise increases with the number of clients, which is much larger than the distributed DP setting. To simplify the analysis, we assume each client holds one example, which is common in local DP. The total number of clients is equal to the number of examples $n$. The variance of the count in each bin is $O(n/\varepsilon^2)$, and the expected quantile estimation error is $O(\frac{1}{Q}+\frac{\sqrt{bh}}{\sqrt{n}\varepsilon})$. Therefore, the Area Error is bounded by $O(\frac{1}{Q} + \frac{h^{1/2}}{n^{1/2}\varepsilon})$. See \citet{Cormode2019} for relevant local DP techniques.

\textbf{Limitations.}
\label{limitation}
A known limitation is that at small $Q$, our method may not outperform range-query-based methods, especially when aligned thresholds are critical. This can be mitigated by hybrid methods of quantiles and range queries. For ROC curve, quantiles of all examples can be computed, and TPR/FPR can be derived using range queries, as in \citet{Cormode2023}. Monotonicity can be enforced using post-processing as in \citet{Barczewski2025}. Under mild assumptions, this approach may yield stronger area error bound for the ROC curve, though it introduces high variance when $Q$ is large and lacks any clear area error bound for the PR curve.

\section{Conclusion}
\label{conclusion}

ROC and PR curves are critical for classifier evaluation. In federated settings, computing these curves must preserve privacy and minimize communication. Our method achieves both by approximating the curves with provably bounded area error, controlled by the number of quantiles $Q$. Communication scales linearly with $Q$, enabling an explicit trade-off between accuracy and bandwidth. The histogram-based quantile estimation is robust to data heterogeneity and compatible with both secure aggregation and differential privacy. Empirical results confirm that our method yields low area error under strong privacy guarantees, making it a practical and efficient solution for federated model evaluation.
Future work will be to integrate these techniques into popular FL frameworks such as Flower~\citep{Beutel2020}, and to further expand the range of metrics that can be computed on federated models to ensure that their performance is reliable. 
Our trust model focuses on privacy for the clients; in future, we plan to study how to control the impact of malicious clients attempting to poison the data collection, by bounding their impact (via succinct non-interactive zero-knowledge proofs to demonstrate that each client's contribution falls within certain limits).

\bibliography{fedcurve}

\newpage
\appendix
\onecolumn
\section{Algorithm for Federated ROC and PR Curves}
\label{sec:alg}

\begin{algorithm}
\caption{Federated Construction of ROC and PR Curves}
\label{alg:fed-roc-pr}
\begin{algorithmic}[1]
\Statex \textbf{Client Part:}
\State \textbf{Input:} Local prediction scores for positives $s^+$ and negatives $s^-$
\State Build histograms $h^+,\ h^-$ over $s^+,\ s^-$
\If{Differential Privacy is enabled}
  \State Add i.i.d. privacy noise to every histogram bin
\EndIf
\State Send histograms $h^+$ and $h^-$ to the server
\Comment{Via Secure Aggregation}
\setcounter{ALG@line}{0}

\Statex
\Statex \textbf{Server Part:}
\State Aggregate client histograms to obtain global counts $H^+$ and $H^-$
\Comment{Via Secure Aggregation}
\If{Differential Privacy is enabled}
  \State Apply post‑processing to enforce consistency across $H^{+}$ and $H^{-}$
\EndIf
\State Extract total positive/negative counts $n^+,\ n^-$ from $H^+,\ H^-$
\State Extract a set of quantiles $q^{+},\ q^{-}$ from $H^+,\ H^-$
\State Interpolate $q^{+},\ q^{-}$ to build ECDFs $\Phi^+(s),\ \Phi^-(s)$
\For{each evaluation threshold $s$}
  \State $F(s) \gets 1-\Phi^{-}(s)$
  \Comment{FPR}
  \State $T(s) \gets 1-\Phi^{+}(s)$
  \Comment{TPR / Recall}
  \State $P(s) \gets \dfrac{T(s)\,n^{+}}{T(s)\,n^{+} + F(s)\,n^{-}}$ \Comment{Precision}
\EndFor
\State Plot ROC curve: $(F(s),T(s))$; Plot PR curve: $(T(s),P(s))$
\end{algorithmic}
\end{algorithm}

\section{Shorthand Notation}
\label{notation}

Throughout the proofs, we use the following shorthand notation for the composition of functions $T$, $F$, $T^{-1}$, $F^{-1}$, where the variable used $f$, $t$, indicates the elision that is happening:
\begin{equation}
T(f) = T(F^{-1}(f)), \quad
F(t) = F(T^{-1}(t))
\end{equation}
For the estimated quantities,
\begin{equation}
\hat{T}(f) = \hat{T}(\hat{F}^{-1}(f)), \quad
\hat{F}(t) = \hat{F}(\hat{T}^{-1}(t)).
\end{equation}

When other combinations arise, such as
\begin{equation}
T(\hat{F}^{-1}(f)), \quad
F(\hat{T}^{-1}(t)), \quad
\hat{T}(F^{-1}(f)), \quad
\hat{F}(T^{-1}(t)),
\end{equation}
we will state them explicitly as needed.

\section{Proof of Theorem \ref{thm:area-error-roc}}
\label{proof-area-error-roc}
We prove that the area error (AE) of the ROC curve is bounded by $O(1/Q)$, where $Q$ is the number of quantiles used separately to the negative and positive classes using evenly spaced thresholds. In Section~\ref{sec:negquant} and ~\ref{sec:posquant} we assume that we have exact quantiles without estimation error; in Section~\ref{sec:ae-roc-approx-q} and Appendix~\ref{proof-area-error-federated}, we explain how the proof changes in the federated setting.

When querying the distribution of the negative class at the exact quantile points, the false positive rates (FPRs) are found precisely; 
when probing elsewhere in the distribution, the FPR absolute error is guaranteed to be at most half the quantile width (the distance between successive quantile measurements). 
The same holds for true positive rates (TPRs) at quantile points for the positive class. 
As a result, Area Error is bounded by the worst-case sum of trapezoidal area differences between the true and estimated ROC curves, as noted in the proof sketch. 
To formalize the proof, the subsequent subsections provide a precise calculation of the worst case error. 
Since TPR and FPR are computed independently, we analyze them separately: we first apply exact quantiles on FPRs while treating TPRs as approximated (Section~\ref{sec:negquant}), and then extend the analysis to include exact quantiles on TPRs as well (Section~\ref{sec:posquant}).

\subsection{Using \texorpdfstring{$Q^-$}{Q-} Quantiles for Negative Examples}
\label{sec:negquant}

Dividing the FPR axis using $Q^-$ quantiles results in $Q^- - 1$ vertical strips. For any pair of consecutive negative quantiles, $q^-_i$ and $q^-_{i+1}$, the corresponding FPR values $f_i=F(q^-_i)$ and $f_{i+1}=F(q^-_{i+1})$ are accurately estimated. Let $t_i=T(q^-_i)$ and $\hat{t}_i=\hat{T}(q^-_i)$ denote the true and estimated TPRs, respectively. 
In order to show a bound on the Area Error (AE), we consider the (deterministic) worst possible behavior of the ROC curve.  

To maximize AE under the constraints of the measured quantiles, we can set $t_i - \hat{t}_i = t_{i+1} - \hat{t}_{i+1} = \frac{1}{2(Q^+ - 1)}$, the largest possible gap while meeting the quantile definition. 
Define $\Delta_i = t_{i+1} - t_i$ as the change in TPRs between consecutive points. 
This scenario is illustrated in the left panel of Figure~\ref{fig:neg-q-error}: it corresponds to the TPR suddenly ``leaping'' up to $t_{i+1}$ at $f_i$ and remaining constant until $f_{i+1}$ (blue line in the figure). 
The opposite extreme, of the TPR remaining steady from $f_i$ to $f_{i+1}$ then leaping up to $t_{i+1}$ would also contribute a symmetric (negative) area error (similar to the dashed grey line), so it suffices to focus on the case of positive area error. 

\begin{figure}[ht]
\centering
\subfloat{%
\includegraphics[width=0.4\columnwidth]{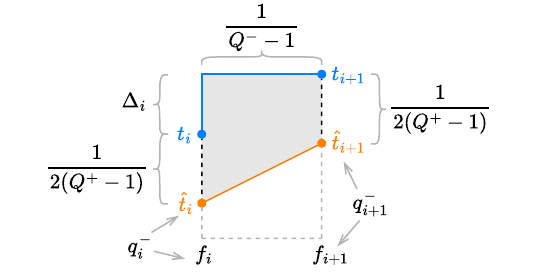}}%
\qquad%
\subfloat{%
\includegraphics[width=0.37\columnwidth]{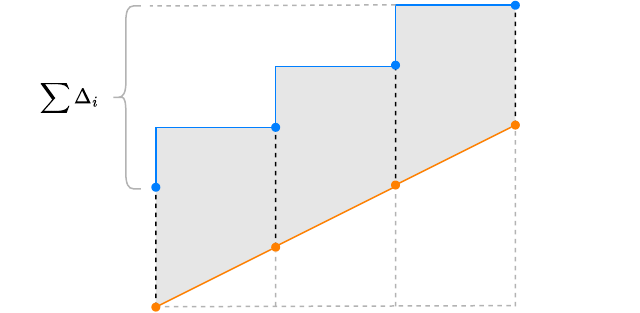}}%
\caption{Worst-case area error with quantiles on negative examples only.}%
\label{fig:neg-q-error}%
\end{figure}

The total AE is computed as the sum of the areas of trapezoids, as illustrated in the right panel of Figure~\ref{fig:neg-q-error}. 
We apply the standard trapezoid area formula, $\frac12(a + b)h$, where $a$ and $b$ are the lengths of parallel sides, and $h$ is the (perpendicular) `height'. 
Since $\sum_i \Delta_i \le 1$, the worst-case error is bounded by $O(1/Q)$:
\begin{align}
\text{AE}_\text{ROC}
&\le\sum_{i=1}^{Q^--1} \frac{\frac{1}{2(Q^+-1)}+\frac{1}{2(Q^+-1)}+\Delta_i}{2}\cdot\frac{1}{Q^--1} \\
&=\sum_{i=1}^{Q^--1} \frac{1}{2(Q^+-1)(Q^--1)} + \frac{\Delta_i}{2(Q^--1)}\\
&\le\frac{1}{2(Q^+-1)} + \frac{1}{2(Q^--1)}=O(1/Q)
\end{align}

\subsection{Adding \texorpdfstring{$Q^+$}{Q+} Quantiles for Positive Examples}
\label{sec:posquant}

Now we consider incorporating $Q^+$ quantiles for the TPR. If no positive quantile falls between $q^-_i$ and $q^-_{i+1}$, the AE is as previously derived. Otherwise, suppose a positive quantile $q^+_j$ lies in $(q^-_i, q^-_{i+1})$. Then $t_j=T(q^+_j)$ is accurate by construction, but its associated FPR $\hat{f}_j=\hat{F}(q^+_j)$ may deviate from $f_j=F(q^+_j)$.

Due to ECDF monotonicity, we have $f_i \le f_j \le f_{i+1}$ and $t_i \le t_j \le t_{i+1}$. 
Again, to maximize AE, we assume the true curve lies above the estimated curve:
\begin{equation}
f_i \le f_j \le \hat{f}_j \le f_{i+1}, \quad
\hat{t}_i \le t_i \le t_j \le \hat{t}_{i+1} \le t_{i+1}
\end{equation}
Set $t_i - \hat{t}_i = t_{i+1} - \hat{t}_{i+1} = \frac{1}{2(Q^+ - 1)}$ and $\hat{f}_j-f_j=\frac{1}{2(Q^- - 1)}$ to maximize AE within the constraints on the quantile values. 
Let $\delta_f=f_j-f_i$ and $\delta_t=t_j-t_i$. This is shown in the left panel of Figure~\ref{fig:neg-pos-q-error}.

\begin{figure}[ht]
\centering
\subfloat{%
\includegraphics[width=0.4\columnwidth]{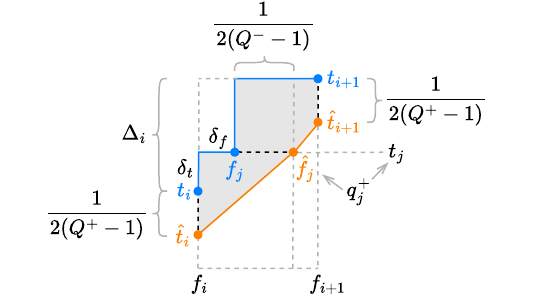}}%
\quad%
\subfloat{%
\includegraphics[width=0.4\columnwidth]{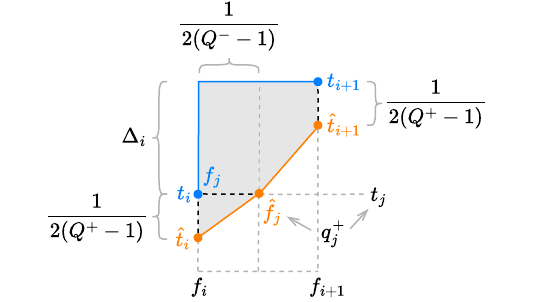}}%
\caption{Worst-case AE with additional positive quantile.}%
\label{fig:neg-pos-q-error}%
\end{figure}

In this configuration, the AE within the strip is composed of two trapezoids minus a rectangle:
\begin{align}
\text{AE}_\text{ROC}^i
\le &
\frac{\frac{1}{2(Q^+-1)}+\Delta_i+\Delta_i-\delta_t}{2}\cdot\Big(\frac{1}{2(Q^--1)}+\delta_f \Big) \notag \\
& +\frac{\Delta_i-\delta_t+\frac{1}{2(Q^+-1)}}{2}\cdot\Big(\frac{1}{2(Q^--1)}-\delta_f \Big)
-\delta_f(\Delta_i-\delta_t)
\\
=&\frac{1}{4(Q^+-1)(Q^--1)} + \frac{3\Delta_i}{4(Q^--1)}
+\underbrace{\delta_t \Big(\delta_f-\frac{1}{2(Q^--1)}\Big) - \frac{\Delta_i\delta_f}{2}}_{\le 0}
\label{eq:approxbound}
\end{align}
The last term contributes a non-positive quantity, since $\delta_f\le \frac{1}{2(Q^--1)}$. Thus, the maximal AE occurs when $\delta_t = \delta_f = 0$, i.e., when $t_j = t_i$ and $f_j = f_i$. In this case, the points $(f_i, t_i)$ and $(f_j, t_j)$ coincide, as shown in the right panel of Figure~\ref{fig:neg-pos-q-error}.

This scenario generalizes: any additional positive quantile within $(q^-_i, q^-_{i+1})$ does not increase AE beyond the bound. So in the worst case
the AE is at the bound from \eqref{eq:approxbound}:
\begin{equation}
\text{AE}_\text{ROC}^i \le
\frac{1}{4(Q^+-1)(Q^--1)} + \frac{3\Delta_i}{4(Q^--1)}
\end{equation}
Adding a positive quantile reduces the first term but increases the second. Summing over all segments, and noting $\sum_i \Delta_i \le 1$, we again obtain an $O(1/Q)$ bound on the total AE.

\subsection{Proof Under Approximate Quantiles}
\label{sec:ae-roc-approx-q}

In the federated setting, it is difficult to obtain exact quantiles due to high communication costs and privacy constraints. Instead, we use approximate quantiles, which guarantee to report a set of points for the quantile locations whose true position within the ECDF deviates by at most a fixed amount. 
Let the quantile estimation error be $\alpha = O(1/Q)$, following the same outline as in Section~\ref{sec:negquant}. 
This means that we can bound the absolute error in TPR at a given $f$ as (see Lemma~\ref{lem:abs-error-curve}):
\begin{equation}
|T(f)-\hat{T}(f)|\le\alpha
\end{equation}

Therefore, the total area error is the integral of the absolute error:
\begin{equation}
\text{AE}_\text{ROC}=\int_0^1 |T(f)-\hat{T}(f)|df\le\alpha=O(1/Q)
\end{equation}

\section{Proof of Theorem \ref{thm:area-error-pr}}
\label{proof-area-error-pr}

We prove that the area error (AE) of the PR curve is $\tilde{O}(1/Q)$ when the class ratio $r = n^+/n^- \ge 0.1$, and $\tilde{O}(\frac{1}{Qr})$ under extreme class imbalance. Different from the proof for the ROC curve (Appendix~\ref{proof-area-error-roc}), we omit the case using exact quantiles and trapezoidal rule, since it is an unrealistic scenario and often not acchievable in practice. Instead, we focus on the case of approximate quantiles, which is more relevant for federated learning settings.

Since precision is dependent on TPR, we can bound the absolute error in precision $|P(t)-\hat{P}(t)|$ as a function of $t$, then compute the area error by integrating over the TPR range. Given Equation~\eqref{eq:precision}, we first reformulate the precision $P$ as a function of $t$:
\begin{equation}
P(t)=\frac{t\cdot n^+}{t\cdot n^++F(t)\cdot n^-}
\end{equation}

We first analyze the perfectly balanced case (Section~\ref{proof-area-error-pr-balanced}), and then extend to imbalanced settings (Section~\ref{proof-area-error-pr-imbalanced}).

\subsection{Perfectly Balanced Class Distribution (\texorpdfstring{$r=1$}{r=1})}
\label{proof-area-error-pr-balanced}

To build an intuition, we first consider the case when the number of positive examples is exactly equal to the number of negative examples.  
This gives the pattern of the proof that we subsequently extend to arbitrary ratios of positive to negative examples. Since $n^+ = n^-$, we have:
\begin{align}
P(t)=\frac{t}{t+F(t)}
\quad\mathrm{and}\quad
\hat{P}(t)=\frac{t}{t+\hat{F}(t)}
\end{align}

Let the absolute error in FPR at given $t$ be bounded by $\alpha=O(1/Q)$ (see Lemma~\ref{lem:abs-error-curve}).
\begin{equation}
|F(t)-\hat{F}(t)|\le\alpha
\end{equation}
To maximize the error $P(t)-\hat{P}(t)$, we minimize $\hat{P}(t)$ by maximizing $\hat{F}(t)$. We set $\hat{F}(t) = F(t) + \alpha$. This gives:
\begin{align}
P(t)-\hat{P}(t)
&\le\frac{t}{t+F(t)}-\frac{t}{t+F(t)+\alpha}\\
&=\frac{t\cdot\alpha}{(t+F(t))(t+F(t)+\alpha)}
\label{eq:plb}
\end{align}

For the other direction, to maximize $\hat{P}(t)-P(t)$, we maximize $\hat{P}(t)$ by minimizing $\hat{F}(t)$. We set $\hat{F}(t)=F(t)-\alpha$. This gives:
\begin{align}
\hat{P}(t)-P(t)
&\le\frac{t}{t+F(t)-\alpha}-\frac{t}{t+F(t)}\\
&=\frac{t\cdot\alpha}{(t+F(t))(t+F(t)-\alpha)}
\label{eq:pub}
\end{align}

Therefore, the absolute error in precision is bounded by the larger of \eqref{eq:plb} and \eqref{eq:pub}, which is \eqref{eq:pub}:
\begin{align}
|\hat{P}(t)-P(t)|
&\le\frac{t\cdot\alpha}{(t+F(t))(t+F(t)-\alpha)}\\
&=\frac{\alpha}{(1+F(t)/t)(t+F(t)-\alpha)}\\
&\le\frac{\alpha}{t+F(t)-\alpha}\\
&\le\frac{\alpha}{t-\alpha}
\end{align}

Note that the precision is in the range of $[0,1]$, and the maximum absolute error is 1. Therefore, for $t\le 2\alpha$, the absolute error is bounded by 1; for $t>2\alpha$, the absolute error is bounded by $\alpha/(t-\alpha)$, which is smaller than 1. Integrating this bound over $t$ yields the total area error:
\begin{align}
\text{AE}_\text{PR}
&=\int_0^1 |P(t)-\hat{P}(t)|dt\\
&\le 2\alpha+\int_{2\alpha}^1 \frac{\alpha}{t-\alpha}dt\\
&=2\alpha+\alpha\log(1/\alpha-1)\\
&\le2\alpha+\alpha\log(1/\alpha)\\
&=O(\log Q/Q)=\tilde{O}(1/Q)
\end{align}
For the last term, since $\alpha\log(1/\alpha)=\log Q/Q$ and $\log Q$ grows much more slowly than $Q$, we may treat it as a constant, and think of this term as being dominated by $1/Q$. 
Pragmatically, setting $Q = 10^4$ gives AE $\approx10^{-3}$, which is sufficient for most applications.

\subsection{Imbalanced Class Distribution (\texorpdfstring{$r \ne 1$}{r ≠ 1})}
\label{proof-area-error-pr-imbalanced}

Under the imbalanced class distribution, we consider the case to maximize $\hat{P}(t)-P(t)$ by maximizing $\hat{P}(t)$. For the other direction, the results are similar so we omit for brevity.
The absolute error in precision is bounded by:
\begin{align}
\hat{P}(t)-P(t)
&=\frac{t\cdot n^+}{t\cdot n^++\hat{F}(t)\cdot n^-}-\frac{t\cdot n^+}{t\cdot n^++F(t)\cdot n^-}\\
&\le\frac{t\cdot n^+}{t\cdot n^++(F(t)-\alpha)\cdot n^-}-\frac{t\cdot n^+}{t\cdot n^++F(t)\cdot n^-}\\
&=n^+n^-\cdot\frac{t}{t\cdot n^++F(t)\cdot n^-}\cdot\frac{\alpha}{t\cdot n^++F(t)\cdot n^--\alpha\cdot n^-}\\
&=n^+n^-\cdot\frac{1}{n^++(F(t)/t)\cdot n^-}\cdot\frac{\alpha}{t\cdot n^++F(t)\cdot n^--\alpha\cdot n^-}
\end{align}
Let $r = n^+/n^-$. We distinguish two cases:

\textbf{Case 1 ($r>1$)}:
The absolute error is bounded by:
\begin{align}
|\hat{P}(t)-P(t)|
&\le\frac{n^-}{n^++(F(t)/t)\cdot n^-}\cdot\frac{n^+\cdot\alpha}{t\cdot n^++F(t)\cdot n^--\alpha\cdot n^-}\\
&=\frac{1}{r+F(t)/t}\cdot\frac{\alpha}{t+F(t)/r-\alpha/r}\\
&\le\frac{1}{r}\cdot\frac{\alpha}{t-\alpha/r}\\
&\le\frac{\alpha}{t-\alpha/r}\\
&\le\frac{\alpha}{t-\alpha}
\end{align}
This is the same as the perfectly balanced class distribution, the area error is bounded by $\tilde{O}(1/Q)$.

\textbf{Case 2 ($r<1$)}: The absolute error is bounded by:
\begin{align}
|\hat{P}(t)-P(t)|
&\le\frac{n^+}{n^++(F(t)/t)\cdot n^-}\cdot\frac{n^-\cdot\alpha}{t\cdot n^++F(t)\cdot n^--\alpha\cdot n^-}\\
&=\frac{1}{1+F(t)/(t\cdot r)}\cdot\frac{\alpha}{t\cdot r+F(t)-\alpha}\\
&\le\frac{\alpha}{t\cdot r-\alpha}
\end{align}
The Area Error is bounded by:
\begin{align}
\text{AE}_\text{PR}
&=\int_0^1 |P(t)-\hat{P}(t)|dt\\
&\le 2\alpha/r+\int_{2\alpha/r}^1 \frac{\alpha}{t\cdot r-\alpha}dt\\
&=2\alpha/r+\alpha/r\cdot\log(r/\alpha-1)\\
&\le2\alpha/r+\alpha/r\cdot\log(r/\alpha)
\end{align}

Therefore, under the mild class imbalanced setting, i.e., $r\ge 0.1$, the Area Error is still bounded by $\tilde{O}(1/Q)$. However, if the class distribution is extremely imbalanced, AE grows as $\tilde{O}(\frac{1}{Qr})$.

\section{Proof of Theorem \ref{thm:area-error-sa} and \ref{thm:area-error-ddp}}
\label{proof-area-error-federated}

In the federated setting, additional quantile estimation error $|q-\hat{q}|$ arise due to histogram binning and added privacy noise. We analyze these errors separately and outline the changes needed to prove the Area Error bound. We first formally define the notion of well-behaved score distribution, and provide the necessary lemmas (Section~\ref{fed-lemma}). Then, we extend the proof to the federated setting with Secure Aggregation and Differential Privacy (Section~\ref{fed-area-error-sa-dp}).

\subsection{Definition, Assumption, and Lemmas}
\label{fed-lemma}
First, consider arbitrary score distributions. Even if the quantile estimation error can be bounded by $b^{-h}$ (the width of the finest histogram bin), this does not imply a bound on the error in the $T$ or $F$. For example, when the score distribution has spikes (i.e., the same score value occurs for a large fraction of positive or negative examples), even a small quantile estimation error can cause large changes in $T$ or $F$. To mitigate this, we require a smoothness condition to ensure that the densities of positive and negative examples do not change too abruptly. Similar to Definition 1 in \citet{Cormode2023}, we formalize the notion of a well-behaved score distribution:
\begin{definition}[Well-behaved score distribution]
\label{def:well-behaved-distribution}
Let $\Phi^+(s)$ and $\Phi^-(s)$ denote the cumulative distribution functions of the score distribution for positive and negative examples, respectively. We say the distributions are $\ell$-well-behaved if both are $\ell$-Lipschitz:
\begin{align}
|\Phi^+(s_1)-\Phi^+(s_2)|\le\ell|s_1-s_2|
\quad\mathrm{and}\quad
|\Phi^-(s_1)-\Phi^-(s_2)|\le\ell|s_1-s_2|
\end{align}
\end{definition}

With well-behaved score distributions, we can bound the absolute error of the TPR and FPR using the quantile estimation error $\alpha$. These bounds are crucial for proving the Area Error of the ROC and PR curves in the federated setting. The following lemmas formalize these bounds.
\begin{lemma}
\label{lem:abs-error-score}
Using the histogram approach for quantile estimation in the federated setting, let the additive error of quantile estimation be bounded by $\alpha$, assuming the score distributions are $\Theta(1)$-well-behaved, then we have the following bounds:
\begin{equation}
|T(s)-\hat{T}(s)|\le\Theta(\alpha)
\quad\mathrm{and}\quad
|F(s)-\hat{F}(s)|\le\Theta(\alpha)
\end{equation}
\begin{proof}
Sine the histogram approach guarantees that the quantile estimation error is bounded:
\begin{equation}
|T^{-1}(t)-\hat{T}^{-1}(t)|\le\alpha
\quad\mathrm{and}\quad
|F^{-1}(f)-\hat{F}^{-1}(f)|\le\alpha
\end{equation}
Since $T(s) = 1 - \Phi^+(s)$ and $F(s) = 1 - \Phi^-(s)$, by $\Theta(1)$-well-behaved distributions, we have:
\begin{equation}
|T(s_1)-T(s_2)|=\Theta(|s_1-s_2|)
\quad\mathrm{and}\quad
|F(s_1)-F(s_2)|=\Theta(|s_1-s_2|)
\end{equation}
Then the absolute error can be bounded as follows:
\begin{align}
|T(s)-\hat{T}(s)|
&=|T(s)-T(T^{-1}(\hat{T}(s)))|\\
&=\Theta(|s-T^{-1}(\hat{T}(s))|)\\
&=\Theta(|\hat{T}^{-1}(\hat{T}(s))-T^{-1}(\hat{T}(s))|)\\
&\le\Theta(\alpha)
\end{align}
Similarly, we have:
\begin{equation}
|F(s)-\hat{F}(s)|\le\Theta(\alpha)
\end{equation}
\end{proof}
\end{lemma}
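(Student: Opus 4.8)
The plan is to transport the given quantile estimation error, which lives in the \emph{score} domain, into the \emph{probability} domain, i.e., into a bound on the vertical gap between the true and estimated TPR/FPR curves at a fixed threshold $s$. The hypothesis ``additive error of quantile estimation bounded by $\alpha$'' I read as a statement about scores: for every probability level $t$, the estimated quantile (a score) is accurate, $|T^{-1}(t)-\hat{T}^{-1}(t)|\le\alpha$, and likewise $|F^{-1}(f)-\hat{F}^{-1}(f)|\le\alpha$. The target $|T(s)-\hat{T}(s)|\le\Theta(\alpha)$ is the inverse statement: it fixes the score and asks for the error in the reported rate. Converting between these two directions is exactly what the smoothness assumption buys, so the whole argument is a change of variable mediated by the $\Theta(1)$-well-behaved condition.

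First I would record the single consequence of $\Theta(1)$-well-behavedness that I use. Since $T=1-\Phi^+$ and $F=1-\Phi^-$, the Lipschitz bound on $\Phi^+,\Phi^-$ with constant $\ell=\Theta(1)$ gives $|T(s_1)-T(s_2)|\le\ell\,|s_1-s_2|$ and similarly for $F$. Reading the assumption two-sidedly (density bounded above and below by positive constants) also makes $T,F$ strictly monotone, so $T^{-1},\hat{T}^{-1}$ (and $F^{-1},\hat{F}^{-1}$) are genuine inverses on all of $[0,1]$. Because $T$ decreases from $T(0)=1$ to $T(1)=0$, its range is $[0,1]$, hence the estimate $\hat{T}(s)\in[0,1]$ always lies in the range of $T$; this is what lets me compose $T$ with $T^{-1}$ freely below.

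The algebraic core is a short chain of identities at a fixed $s$. I write $\hat{T}(s)=T(T^{-1}(\hat{T}(s)))$ using $T\circ T^{-1}=\mathrm{id}$, so both arguments sit inside $T$ and the Lipschitz bound applies:
\begin{equation}
|T(s)-\hat{T}(s)|=|T(s)-T(T^{-1}(\hat{T}(s)))|\le\ell\,|s-T^{-1}(\hat{T}(s))|.
\end{equation}
I then rewrite the left score as $s=\hat{T}^{-1}(\hat{T}(s))$ via $\hat{T}^{-1}\circ\hat{T}=\mathrm{id}$, turning the score gap into a difference of the two inverses evaluated at the \emph{same} level $t=\hat{T}(s)$:
\begin{equation}
|s-T^{-1}(\hat{T}(s))|=|\hat{T}^{-1}(\hat{T}(s))-T^{-1}(\hat{T}(s))|\le\alpha,
\end{equation}
where the last step is the hypothesized quantile error at level $t=\hat{T}(s)$. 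Combining the two displays gives $|T(s)-\hat{T}(s)|\le\ell\alpha=\Theta(\alpha)$, and the identical argument with $F,\Phi^-$ in place of $T,\Phi^+$ yields $|F(s)-\hat{F}(s)|\le\Theta(\alpha)$.

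The step I expect to require the most care is the bookkeeping around the inverses: I must ensure $\hat{T}(s)$ genuinely lies in the domain of $T^{-1}$ (handled by surjectivity of $T$ onto $[0,1]$) and that $\hat{T}^{-1}\circ\hat{T}$ returns $s$ exactly (needs $\hat{T}$ injective). Both follow once the well-behaved assumption is read as a two-sided density bound; if one keeps only the upper Lipschitz constant, the chain still delivers the stated $O(\alpha)$ upper bound, but the inverses must be treated as generalized quantile inverses and I would note that composing them with the forward map recovers the identity up to the bin-level slack already absorbed into $\alpha$. The smoothness assumption is doing the essential work: without it an arbitrarily small score error $\alpha$ could land on a jump in $\Phi^+$ (a score shared by a constant fraction of examples), making $|T(s)-\hat{T}(s)|$ order one and the claim false.
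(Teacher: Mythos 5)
Your proposal is correct and follows essentially the same route as the paper's proof: the identical chain $|T(s)-\hat{T}(s)|=|T(s)-T(T^{-1}(\hat{T}(s)))|\le\ell\,|\hat{T}^{-1}(\hat{T}(s))-T^{-1}(\hat{T}(s))|\le\ell\alpha$, with the Lipschitz property of $\Phi^{+},\Phi^{-}$ transporting the quantile (score-domain) error into the rate domain. Your added bookkeeping---checking that $\hat{T}(s)$ lies in the range of $T$ and that $\hat{T}$ is invertible, and observing that the one-sided Lipschitz bound already suffices for the upper bound---is more careful than the paper's own write-up, which composes the inverses without comment.
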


\begin{lemma}
\label{lem:abs-error-curve}
Using the histogram approach for quantile estimation in the federated setting, let the additive error of quantile estimation is bounded by $\alpha$, assuming the score distributions are $\Theta(1)$-well-behaved, then we have the following bounds:
\begin{equation}
|T(f)-\hat{T}(f)|\le\Theta(\alpha)
\quad\mathrm{and}\quad
|F(t)-\hat{F}(t)|\le\Theta(\alpha)
\end{equation}
\end{lemma}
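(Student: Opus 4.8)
The plan is to reduce Lemma~\ref{lem:abs-error-curve} to the already-established Lemma~\ref{lem:abs-error-score} by a triangle-inequality decomposition that isolates the difference between evaluating at the true versus the estimated score threshold. Recall from the shorthand (Appendix~\ref{notation}) that $T(f) = T(F^{-1}(f))$ and $\hat{T}(f) = \hat{T}(\hat{F}^{-1}(f))$, so the two quantities are evaluated at the (generally distinct) scores $s = F^{-1}(f)$ and $\hat{s} = \hat{F}^{-1}(f)$. First I would write
\begin{equation}
|T(f)-\hat{T}(f)| = |T(s)-\hat{T}(\hat{s})| \le |T(s)-T(\hat{s})| + |T(\hat{s})-\hat{T}(\hat{s})|,
\end{equation}
routing the inequality through $T(\hat{s})$ so that the second term is exactly of the form bounded by Lemma~\ref{lem:abs-error-score}, namely $|T(\hat{s})-\hat{T}(\hat{s})| \le \Theta(\alpha)$.

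The first term measures the true TPR function at two nearby score thresholds. Here I would invoke the $\Theta(1)$-well-behaved assumption, which yields $|T(s)-T(\hat{s})| = \Theta(|s-\hat{s}|) = \Theta(|F^{-1}(f)-\hat{F}^{-1}(f)|)$. The histogram-based quantile guarantee supplies $|F^{-1}(f)-\hat{F}^{-1}(f)| \le \alpha$ (the same estimate used inside the proof of Lemma~\ref{lem:abs-error-score}), so this contribution is also $\Theta(\alpha)$. Summing gives $|T(f)-\hat{T}(f)| \le \Theta(\alpha)$. The bound for $|F(t)-\hat{F}(t)|$ follows by the identical argument with the roles of the positive and negative classes exchanged, using $|T^{-1}(t)-\hat{T}^{-1}(t)| \le \alpha$ in the quantile-error step.

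The step I expect to be most delicate is ensuring that only the Lipschitz property of the \emph{true} distribution is invoked. By decomposing through $T(\hat{s})$ rather than through $\hat{T}(s)$, the well-behaved bound is applied solely to $T$, while the entire gap between the true and estimated curves is absorbed into Lemma~\ref{lem:abs-error-score}. This sidesteps having to argue that the reconstructed curve $\hat{T}$ is itself well-behaved, which would require propagating a smoothness guarantee through the histogram interpolation. One must also read $\Theta(1)$-well-behavedness as a \emph{two-sided} bound on the modulus of continuity, so that the inverse-quantile error $\alpha$ translates back into a TPR/FPR error of the same order rather than being inflated by a vanishing density.
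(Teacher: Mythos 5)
Your proposal is correct and follows essentially the same route as the paper: the paper's proof inserts $T(\hat{F}^{-1}(f))$ as the intermediate term, bounds $|T(F^{-1}(f))-T(\hat{F}^{-1}(f))|$ via the $\Theta(1)$-well-behaved (Lipschitz) assumption together with the quantile guarantee $|F^{-1}(f)-\hat{F}^{-1}(f)|\le\alpha$, and bounds the remaining term $|T(\hat{F}^{-1}(f))-\hat{T}(\hat{F}^{-1}(f))|$ by Lemma~\ref{lem:abs-error-score}, exactly as you do. Your closing remarks about avoiding any smoothness requirement on $\hat{T}$ and reading well-behavedness as a two-sided modulus bound are accurate refinements of what the paper leaves implicit.
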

\begin{proof}
By $\Theta(1)$-well-behaved distributions and additive error of quantile estimation, we have:
\begin{equation}
|T(F^{-1}(f))-T(\hat{F}^{-1}(f))|=\Theta(|F^{-1}(f)-\hat{F}^{-1}(f)|)=\Theta(\alpha)
\end{equation}
Now consider the absolute error:
\begin{align}
|T(f)-\hat{T}(f)|
&=|T(F^{-1}(f))-\hat{T}(\hat{F}^{-1}(f))|\\
&=|T(F^{-1}(f))-T(\hat{F}^{-1}(f))+T(\hat{F}^{-1}(f))-\hat{T}(\hat{F}^{-1}(f))|\\
&\le|T(F^{-1}(f))-T(\hat{F}^{-1}(f))|+|T(\hat{F}^{-1}(f))-\hat{T}(\hat{F}^{-1}(f))|\\
&\le\Theta(\alpha)+\Theta(\alpha)=\Theta(\alpha)
\end{align}
Similarly, we have:
\begin{equation}
|F(t)-\hat{F}(t)|\le\Theta(\alpha)
\end{equation}
\end{proof}

\textbf{Remark}: If the Lipschitz condition is violated, approximation guarantees may degrade, especially near condensed data distributions. Specifically, the additive error guarantee on the ECDF may not be held in the dense region. 
Nonetheless, our empirical results suggest robustness in practical settings where distributions are not pathological.

\subsection{Secure Aggregation and Differential Privacy}
\label{fed-area-error-sa-dp}
\textbf{Secure Aggregation}:
In the absence of privacy noise, the quantile estimation error is at most the bin width $b^{-h}$.
Assuming the score distributions are $\Theta(1)$-well-behaved, the Lipschitz condition implies that the additive error in $T$ and $F$ is also bounded by $\Theta(b^{-h})$, which is still $O(1/Q)$.
Combining this with the Area Error Theorems gives an overall area error of $O(1/Q)$ for the ROC curve and $\tilde{O}(1/Q)$ for the PR curve.

\textbf{Differential Privacy}: When incorporating differential privacy, the privacy budget $\varepsilon$ is split across $h$ layers. Each bin incurs noise with variance $O(h^2/\varepsilon^2)$. Since quantile queries require traversing at most $O(b)$ nodes per layer, and there are $h$ layers, the total query variance accumulates to $O(b h^3 / \varepsilon^2)$. The expected quantile error is thus $\tilde{O}(\frac{1}{Q}+\frac{1}{n\varepsilon})$, assuming the total dataset size $n \gg \sqrt{b} h^{3/2}$. 
Applying standard statistical bounds and assuming the score distributions are $\Theta(1)$-well-behaved, the AE under DP is bounded by $\tilde{O}(\frac{1}{Q} + \frac{1}{n\varepsilon})$.

Meanwhile, $n^+$ and $n^-$ are estimated from the aggregated histograms and are thus noisy under DP. Each count has an absolute error of $O(h/\varepsilon)$, where $h=O(\log Q)$. For typical $\varepsilon$ (e.g., $\varepsilon \approx 1$), this error is negligible compared to $n^+$ and $n^-$. For the ROC curve, $n^+$ and $n^-$ are not needed. For the PR curve, the error introduced by noisy counts is dominated by other factors and does not affect the overall area error bound. Therefore, Theorem~\ref{thm:area-error-ddp} remains valid.

\section{Experiments of PR Curve using XGBoost}
\label{expt-pr-xgb}

\begin{figure}[H]
\centering
\includegraphics[width=\columnwidth]{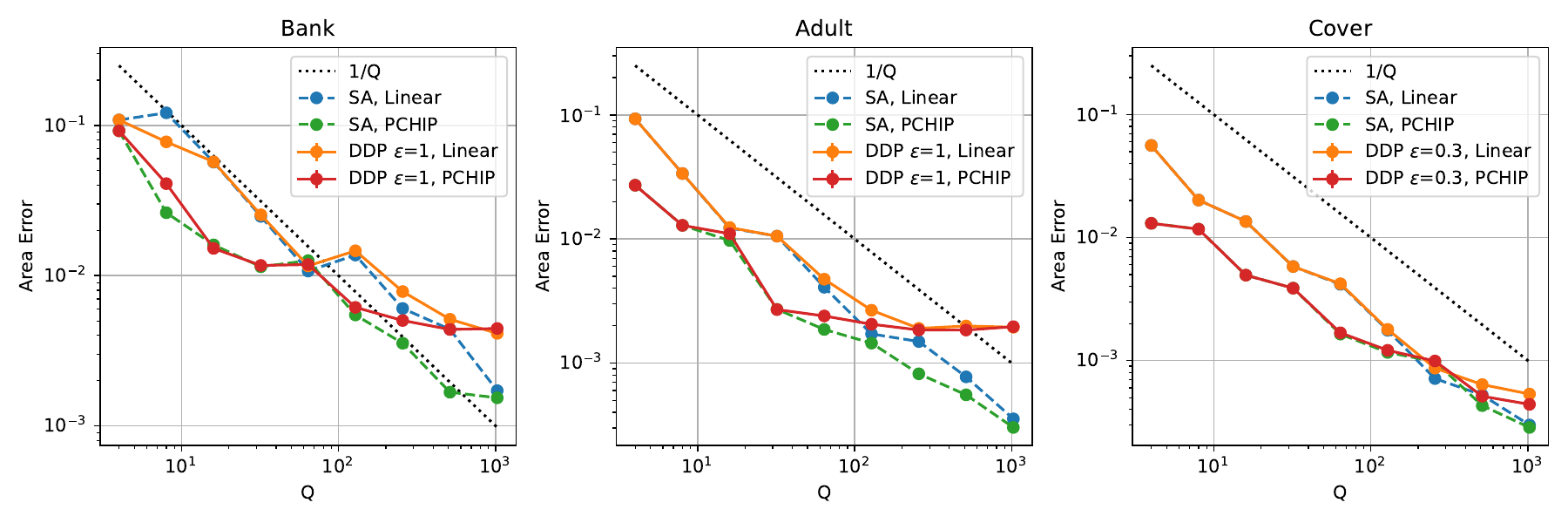}%
\caption{Interpolation method comparison (PR, XGBoost).}%
\label{fig:interp-pr-xgb}%
\end{figure}

\begin{figure}[H]
\centering
\includegraphics[width=\columnwidth]{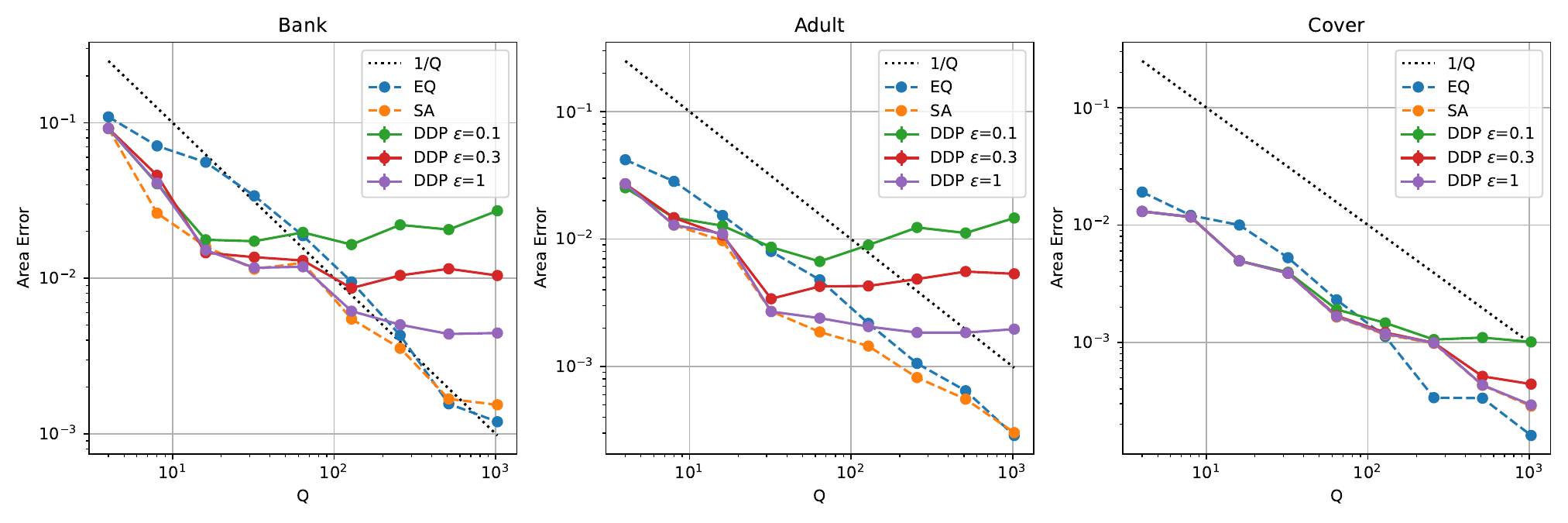}%
\caption{Effect of $\varepsilon$ (PR, XGBoost).}%
\label{fig:epsilon-pr-xgb}%
\end{figure}

\begin{figure}[H]
\centering
\includegraphics[width=\columnwidth]{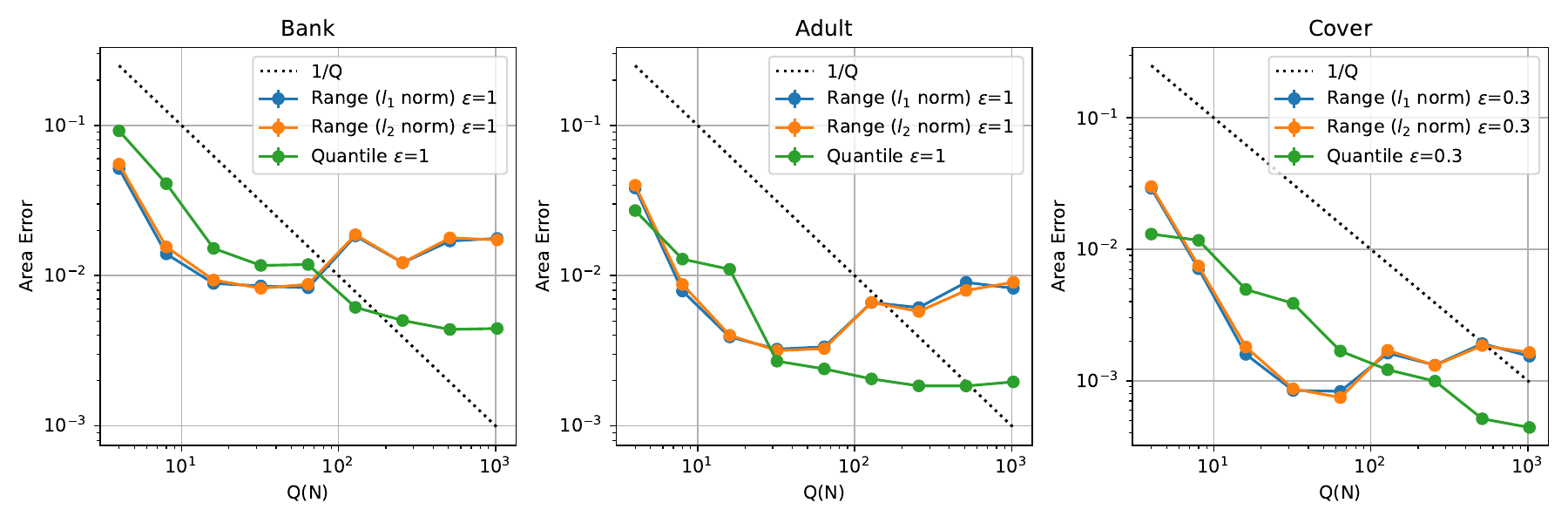}%
\caption{Comparison with range-query method (PR, XGBoost).}%
\label{fig:range-pr-xgb}%
\end{figure}

\begin{figure}[H]
\centering
\includegraphics[width=\columnwidth]{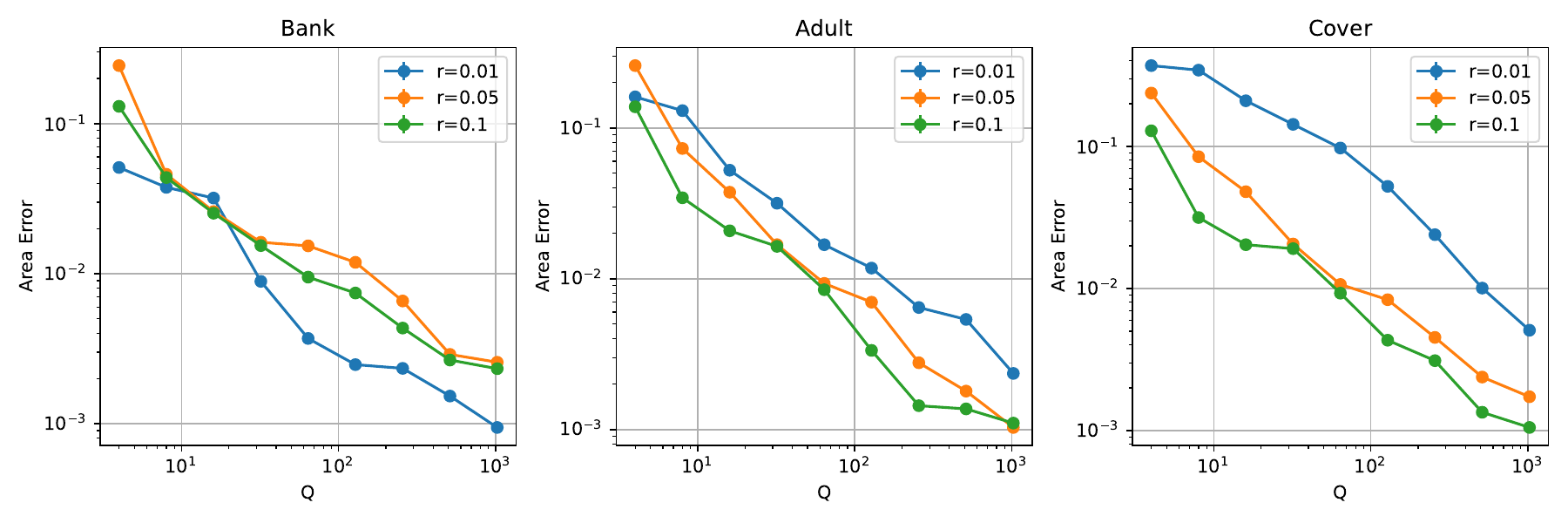}%
\caption{Effect of class imbalance on PR curve (XGBoost).}%
\label{fig:ratio-pr-xgb}%
\end{figure}

\section{Experiments using Logistic Regression}
\label{expt-logistic-regression}

\begin{figure}[H]
\centering
\includegraphics[width=\columnwidth]{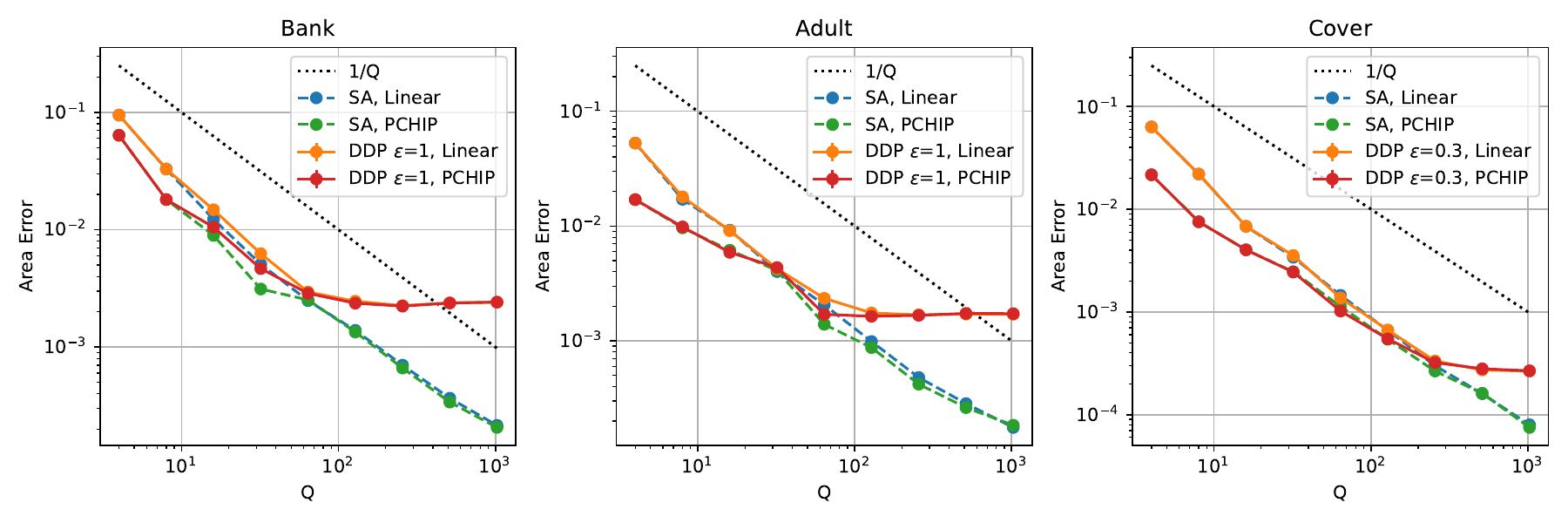}%
\caption{Interpolation method comparison (ROC, Logistic Regression).}%
\label{fig:interp-roc-lr}%
\end{figure}

\begin{figure}[H]
\centering
\includegraphics[width=\columnwidth]{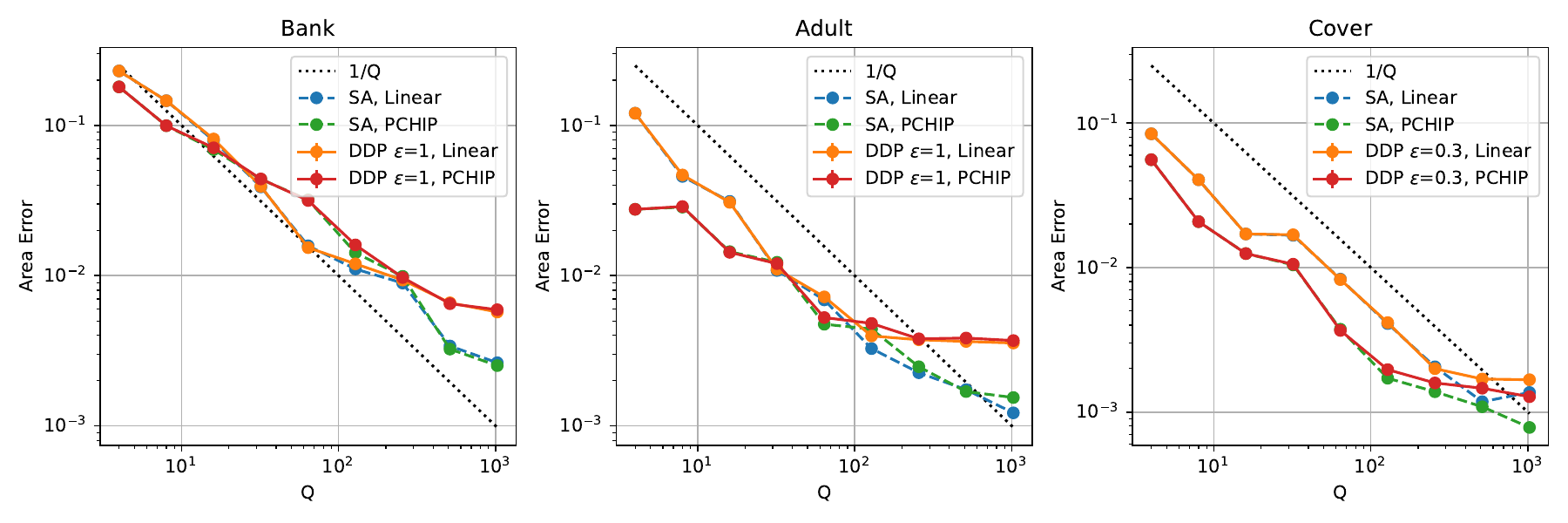}%
\caption{Interpolation method comparison (PR, Logistic Regression).}%
\label{fig:interp-pr-lr}%
\end{figure}

\begin{figure}[H]
\centering
\includegraphics[width=\columnwidth]{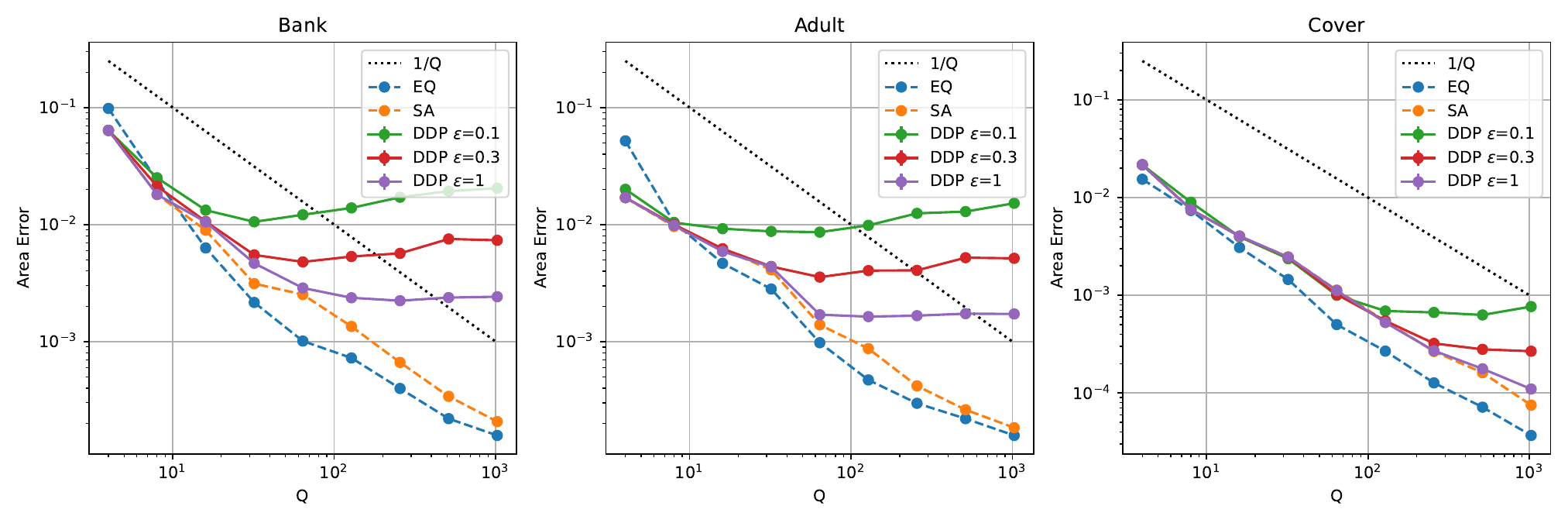}%
\caption{Effect of $\varepsilon$ (ROC, Logistic Regression).}%
\label{fig:epsilon-roc-lr}%
\end{figure}

\begin{figure}[H]
\centering
\includegraphics[width=\columnwidth]{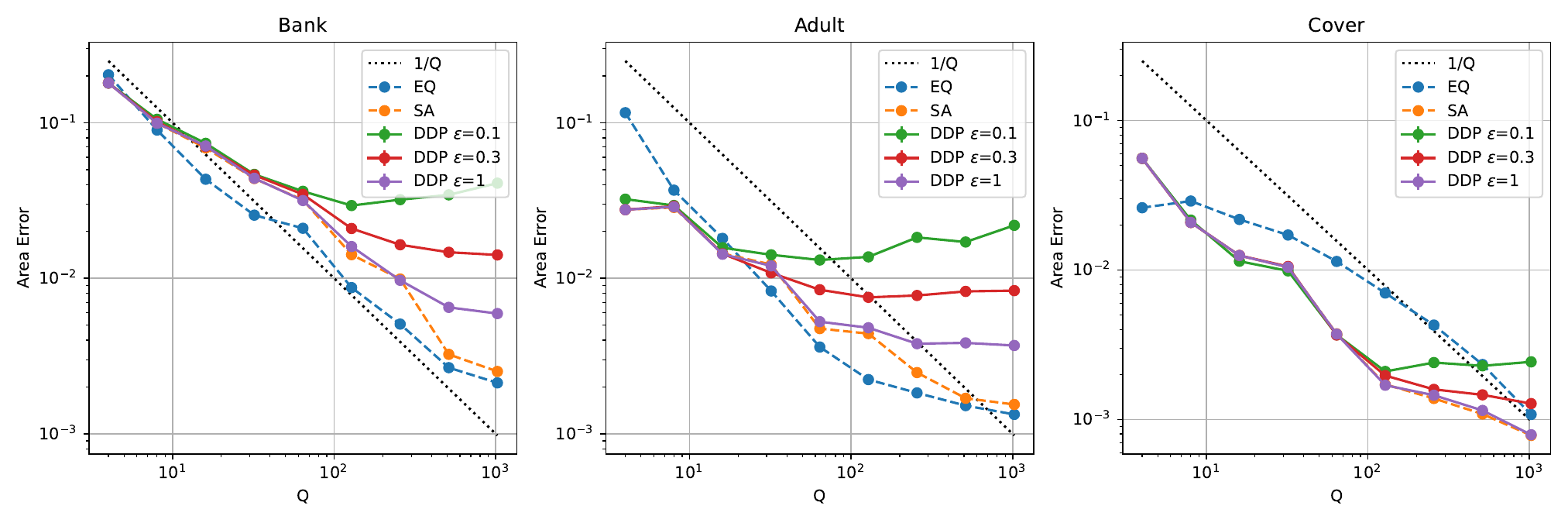}%
\caption{Effect of $\varepsilon$ (PR, Logistic Regression).}%
\label{fig:epsilon-pr-lr}%
\end{figure}

\begin{figure}[H]
\centering
\includegraphics[width=\columnwidth]{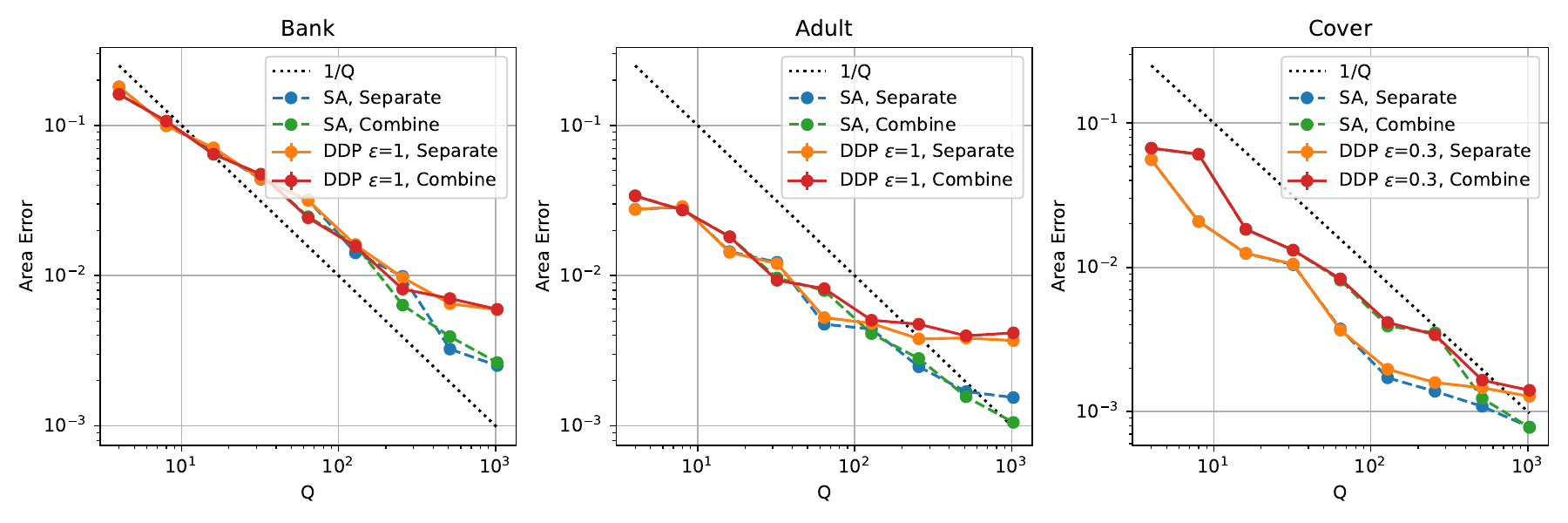}%
\caption{Comparison of strategies for PR curve (Logistic Regression).}%
\label{fig:pr-strategy-lr}%
\end{figure}

\begin{figure}[H]
\centering
\includegraphics[width=\columnwidth]{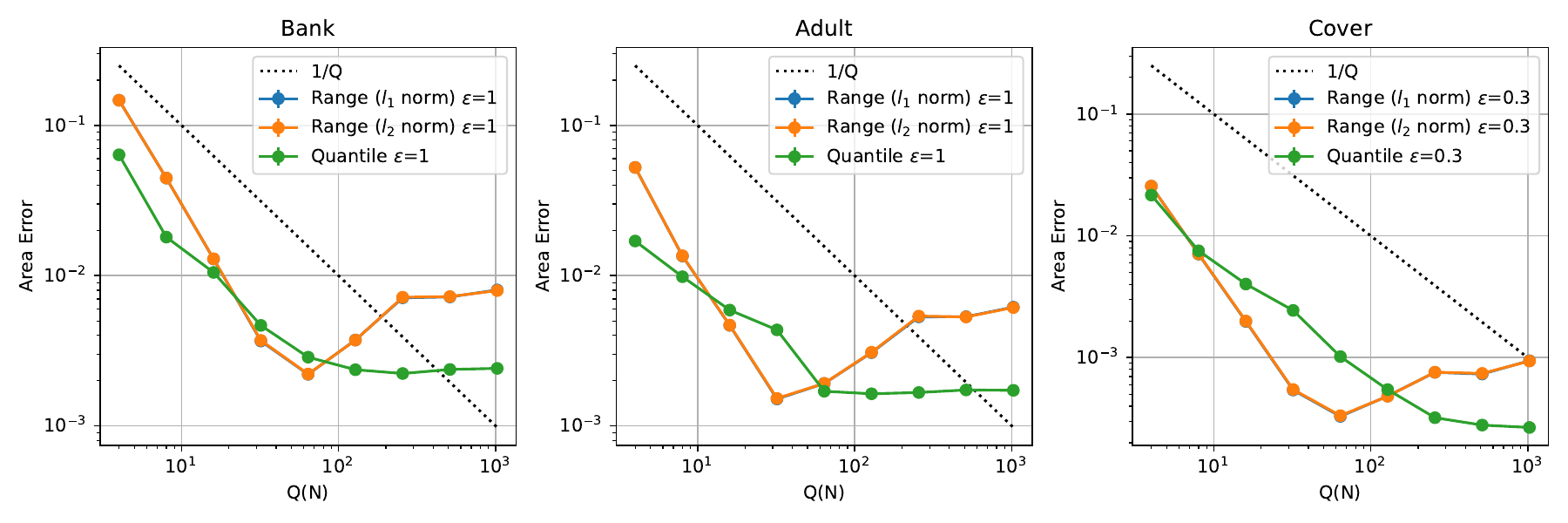}%
\caption{Comparison with range-query method (ROC, Logistic Regression).}%
\label{fig:range-roc-lr}%
\end{figure}

\begin{figure}[H]
\centering
\includegraphics[width=\columnwidth]{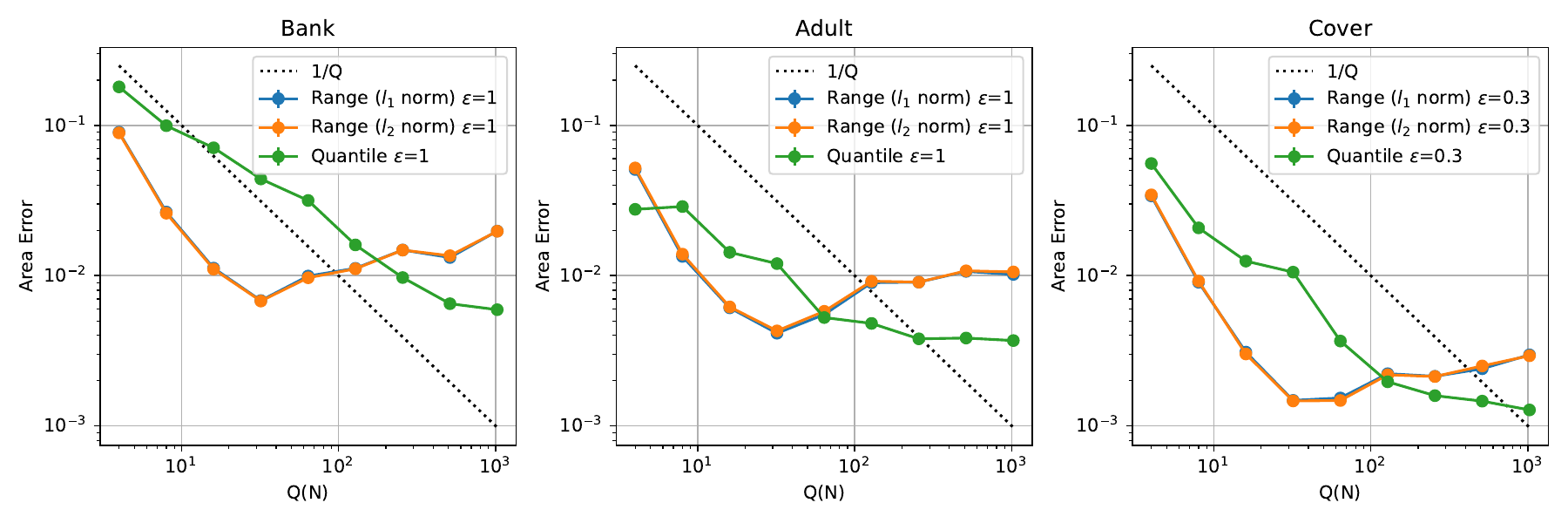}%
\caption{Comparison with range-query method (PR, Logistic Regression).}%
\label{fig:range-pr-lr}%
\end{figure}

\begin{figure}[H]
\centering
\includegraphics[width=\columnwidth]{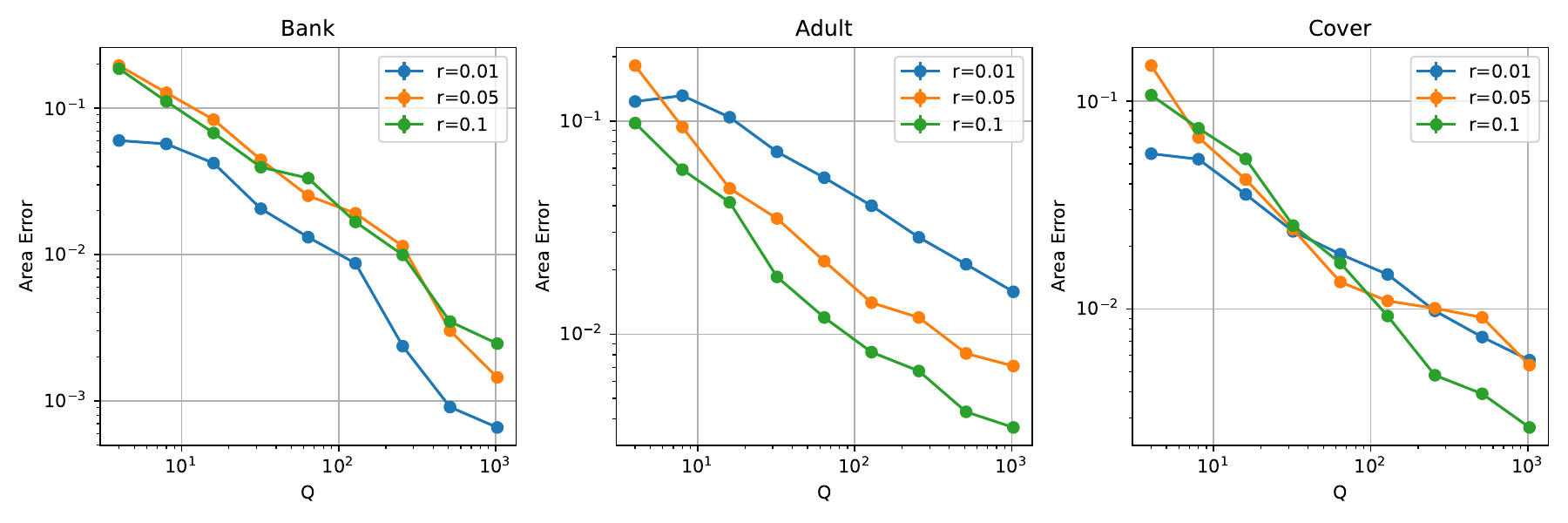}%
\caption{Effect of class imbalance on PR curve (Logistic Regression).}%
\label{fig:ratio-pr-lr}%
\end{figure}

\end{document}